\pdfoutput=1  
\documentclass[11pt]{article}

\usepackage[utf8]{inputenc}
\usepackage[T1]{fontenc}
\usepackage[colorlinks=true, allcolors=blue, pagebackref]{hyperref}
\usepackage{url}
\usepackage{booktabs}
\usepackage{amsfonts}
\usepackage{amsmath}
\usepackage{amssymb}
\usepackage{amsthm}
\usepackage{nicefrac}
\usepackage{microtype}
\usepackage{xcolor}
\usepackage{graphicx}
\usepackage{subcaption}
\usepackage{multirow}
\usepackage{enumitem}
\usepackage{cleveref}
\usepackage{longtable}
\usepackage{float}   

\newcommand{\R}{\mathbb{R}}
\newcommand{\sgn}{\operatorname{sign}}

\newtheorem{theorem}{Theorem}
\newtheorem{proposition}[theorem]{Proposition}

\newtheorem{remark}{Remark}

\title{Geometry-Constrained Kolmogorov–Arnold Networks:
Learning Edge Geometry via Banach Duality}

\author{%
  K.~S.~Sesh Kumar \\
  Brevan Howard Centre for Financial Analysis \\
  Imperial Business School \\
  \texttt{skarri@imperial.ac.uk}
}

\date{\today}

\begin{document}

\maketitle

\begin{abstract}

Kolmogorov–Arnold Networks (KANs) replace fixed activations in deep architectures with learnable univariate edge functions, making the choice of edge parametrisation central.  Existing variants rely on fixed bases such as splines, polynomials, or Fourier features, which impose a function-space geometry before data are observed.  We introduce geometry-constrained KANs, a family of edge activations derived from Banach duality maps in which the geometry itself is learned through a scalar exponent $p > 1$ per edge. This exponent controls the qualitative response: sub-Euclidean values produce sharp, threshold-like behaviour reminiscent of the $\ell_1$ (LASSO) geometry, $p = 2$ recovers the linear regime, and larger values produce flatter responses near the origin.  Across 50 symbolic-regression targets ($40$ from the AI Feynman benchmark plus $10$ synthetic stress tests), geometry-constrained KANs match or beat every fixed-basis baseline on median NRMSE (Banach-KAN $0.030$, tying Chebyshev and improving on splines); on average rank Banach-KAN is best on the $18$-equation core ($2.00$) and statistically tied with the strongest spline on the full benchmark ($2.32$ vs.\ $2.34$). The clearest gains appear under measurement noise: as $\sigma$ grows from $0$ to $1$, $\ell^p$-KAN degrades only $3.7\times$---below even a cross-validated spline ($\approx\!11\times$)---while an unregularised spline degrades $21.6\times$; Banach-KAN degrades $8.8\times$, comparable to a tuned spline but far more stable than the unregularised one.  Banach-KAN also takes the most per-equation wins in the small-sample regime, with fixed-basis models catching up only as the training set grows. Learned exponents provide an interpretable, relative signal: at a fixed initialisation they reveal a consistent, target-dependent geometric ordering across equation families and input dimensions.
\end{abstract}

\section{Introduction}

Many scientific regression problems are governed by low-dimensional nonlinear dependencies: pendulum periods depend on amplitude, radiation laws on temperature, and traffic speeds on density. Kolmogorov--Arnold Networks (KANs)~\cite{liu2024kan} are designed for this setting by replacing fixed activations in standard neural networks with learnable univariate edge functions, summing their responses at each node. The central modelling decision in KANs is therefore the choice of edge function.

Existing approaches fix this choice through a basis: B-splines~\cite{liu2024kan}, radial basis functions~\cite{li2024rbfkan}, wavelets~\cite{bozorgasl2024wav}, Chebyshev polynomials~\cite{ss2024chebyshev}, Jacobi polynomials~\cite{aghaei2024fkan}, or Fourier features~\cite{xu2024fourierkan}. Each basis implicitly imposes a function-space geometry—such as smoothness, periodicity, or locality—before any data are observed. This choice is not benign.

Figure~\ref{fig:fixed_bases} shows that even at a matched parameter budget, different fixed bases fail in different ways when fitting simple discontinuous targets such as a Heaviside step or a multi-level staircase. No fixed basis performs uniformly well across both tasks. The limitation is not representational capacity but geometric mismatch: smooth bases approximate discontinuities by spreading them across coefficients, leading to oscillations or bias.

\begin{figure}[t]
\centering
\includegraphics[width=\textwidth]{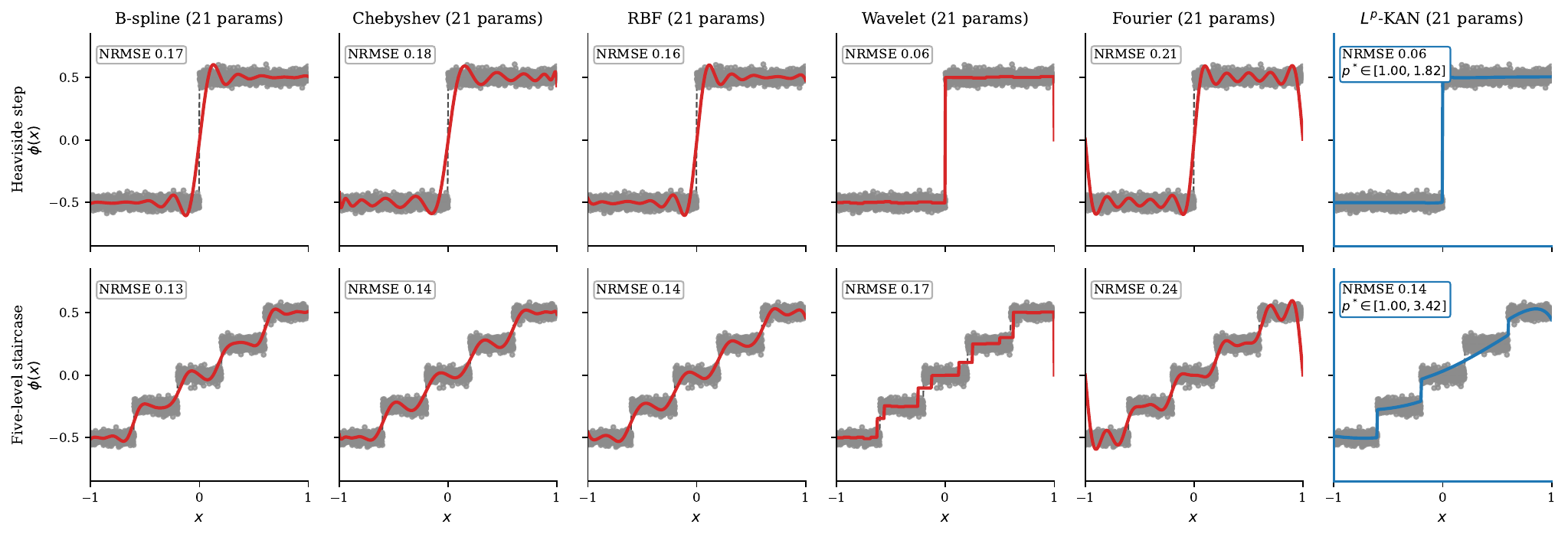}
\caption{Fixed bases lock in geometry. All columns use a matched budget of $21$ trainable parameters, and each target is fit from $N=2000$ samples under label noise $\sigma = 0.03$, so every panel is a noisy fit rather than a noiseless interpolation. Different bases fail differently on the same targets, so no single fixed geometry is uniformly appropriate across both; the geometry-adaptive $\ell^p$-KAN adapts to each rather than committing to one. A budget sweep across $\{13,21,41,125\}$ parameters is reported in \Cref{app:fig1_sweep}.}
\label{fig:fixed_bases}
\end{figure}

These observations suggest that the key modelling choice is not the basis itself, but the geometry it induces. Rather than fixing this geometry a priori, we seek a way to adapt it to the data.

The behaviour of a function class—its smoothness, sharpness, and growth—is governed by the geometry of the space it inhabits, which is determined by its underlying norm. A natural way to parameterise such geometries is through a one-parameter family that continuously interpolates between qualitatively different regimes.

This perspective leads to the $\ell^p$ family (\Cref{sec:geometry}), where a single exponent $p$ controls the geometry. Rather than selecting a basis, we parameterise each edge using the duality map associated with this family and learn $p$ directly from data. \emph{Our goal is not to develop a full generalisation theory, but to provide a minimal, interpretable parameterisation of function-space geometry whose empirical effects we study.}

This yields \emph{geometry-constrained KANs}: edge activations whose form is fixed by the underlying geometry but whose geometry is adaptive. The exponent $p$ becomes a simple, interpretable control that moves each edge between sharp, linear, and saturating regimes.

\textbf{Contributions.}
\begin{itemize}[nosep,leftmargin=*]
\item We introduce geometry-constrained KANs, where edge activations are derived from duality maps with a learned exponent.
\item We provide a unified interpretation of KAN edge functions as geometry-controlled mappings rather than basis expansions.
\item We show empirically that learning geometry (i) improves robustness under measurement noise---the $\ell^p$ edge degrades less than cross-validated spline regularisation---and (ii) yields better approximation than basis-based KANs in the small-sample regime, outperforming fixed bases and explicit spline regularisation.
\item We derive a testable prediction from the geometry---a trainable-depth condition $p > 3/2$ (\Cref{prop:depth})---and isolate the geometry effect from generic parametric flexibility through matched-budget controls (\Cref{sec:controls}).
\item We demonstrate that learned exponents give an interpretable, init-anchored signal that reflects the structure of the underlying task.
\end{itemize}

\section{Background and Geometry}
\label{sec:background}

We develop a geometric perspective on KANs in which the choice of edge function is understood as a choice of function-space geometry. This viewpoint allows us to move from fixed basis constructions to a parameterisation in which the geometry itself is learned.

\subsection{Kolmogorov--Arnold Networks}

Kolmogorov--Arnold Networks (KANs) replace fixed nonlinear activations with learnable univariate functions on edges. A layer is defined as
\begin{equation}
\label{eq:kan_layer}
(\Phi_\ell(x))_j = \sum_{i=1}^{d_{\ell-1}} \phi_{j,i}^{(\ell)}(x_i), \qquad j=1,\dots,d_\ell,
\end{equation}
so the model class is determined by the edge functions $\phi_{j,i}$. In contrast to standard neural networks, which fix the activation and learn affine weights, KANs fix the aggregation and learn the nonlinear transformations.

\subsection{Fixed bases as implicit geometry}

Existing KAN variants parameterise $\phi_{j,i}$ using fixed basis expansions, including B-splines~\cite{liu2024kan}, radial basis functions~\cite{li2024rbfkan}, wavelets~\cite{bozorgasl2024wav}, Chebyshev polynomials~\cite{ss2024chebyshev}, Jacobi polynomials~\cite{aghaei2024fkan}, and Fourier features~\cite{xu2024fourierkan}.

Although these constructions differ, they share a common limitation: the function space is fixed before observing any data. Each basis encodes a particular inductive bias—such as smoothness, periodicity, or locality—which determines how functions are represented.

Figure~\ref{fig:fixed_bases} illustrates the consequences. Even with matched parameter budgets, different bases exhibit distinct failure modes when fitting discontinuous targets such as steps and staircases. Smooth bases approximate discontinuities by distributing them across coefficients, leading to oscillations or bias. This reflects a mismatch between the imposed geometry and the structure of the target.

\begin{figure}[t]
\centering
\includegraphics[width=\textwidth]{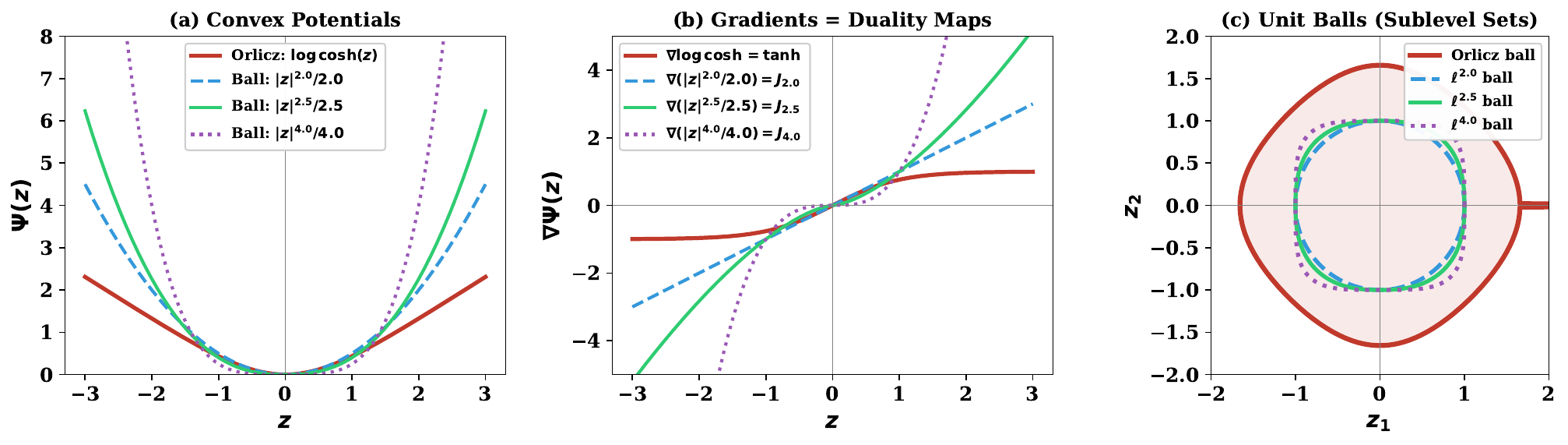}
\caption{Geometry induces activation. Convex potentials define the geometry, their gradients act as activations, and the unit balls illustrate how the geometry varies.}
\label{fig:potentials}
\end{figure}

\subsection{Adaptive activations}
\label{sec:adaptive}

A related line makes the activation itself trainable: adaptive-activation methods~\cite{jagtap2020adaptive} learn an input \emph{slope} inside a fixed-shape nonlinearity $\sigma(ax)$, which accelerates convergence without leaving the activation's regularity class. The construction we develop differs in kind. The exponent $p$ changes the \emph{shape and regularity class itself}---threshold-like as $p\to 1$, linear at $p=2$, and flattened for large $p$---so it is the induced geometry, not the input scale, that adapts. The learnable-shape-parameter idea is therefore not new, but adapting geometry rather than slope is; we isolate the two effects empirically in \Cref{sec:controls}, where at a matched per-edge budget a slope-adaptive edge is never competitive with the geometry-adaptive one.

\subsection{Geometry, duality, and activations}
\label{sec:geometry}

To move beyond fixed bases, we consider geometry at a more fundamental level. Let us consider Banach spaces~\cite{megginson1998banach,cioranescu1990geometry}, where geometry is induced by a norm: the norm determines the shape of the unit ball and therefore how distances, variations, and gradients are measured.

A norm also induces a canonical convex potential that encodes this geometry. For the $\ell^p$ family, this potential is
\[
\Psi_p(z) = \tfrac{1}{p}|z|^p,
\]
whose level sets coincide with scaled unit balls. The gradient of this potential defines the associated duality map,
\[
J_p(z) = \nabla \Psi_p(z) = \operatorname{sign}(z)(|z|+\varepsilon)^{p-1},
\]
which converts geometric structure into a nonlinear transformation~\cite{cioranescu1990geometry}.

This establishes a direct correspondence:
\[
\text{norm} \;\Rightarrow\; \text{geometry} \;\Rightarrow\; \text{convex potential} \;\Rightarrow\; \text{activation}.
\]

Figure~\ref{fig:potentials} visualises this relationship. The convex potentials define the geometry, their gradients define the activations, and the unit balls illustrate how geometry varies with $p$. Smaller values of $p$ produce sharper responses, while larger values flatten behaviour near the origin and amplify tails.

This perspective extends beyond $\ell^p$ geometries. For example, the convex potential
\[
\Psi(z) = \log\cosh(z)
\]
induces the activation
\[
\nabla \Psi(z) = \tanh(z).
\]
Unlike the power-law growth of $\ell^p$, this potential grows quadratically near the origin and linearly in the tails, leading to bounded, saturating activations. As shown in Figure~\ref{fig:potentials}, these two geometries induce qualitatively different response behaviours.

Taken together, this yields a unifying view: activation functions arise as gradients of convex potentials determined by geometry. Rather than selecting an activation directly, one selects a geometry, and the activation follows. Learning the exponent $p$ therefore corresponds to learning the underlying geometry.

\subsection{A design gap}

The preceding discussion highlights a structural limitation in existing approaches. Current KAN variants implicitly fix the underlying function-space geometry through their choice of basis, while standard neural networks fix the activation and learn affine weights. In both cases, the geometry is specified a priori and remains unchanged during training.

This is restrictive: the experiments in Figure~\ref{fig:fixed_bases} show that no single fixed geometry is well-suited across different target behaviours. Smooth bases struggle with discontinuities, while bounded activations cannot capture unbounded growth. These limitations arise not from insufficient capacity, but from a mismatch between the imposed geometry and the structure of the data.

This suggests that the key modelling choice is not the basis or the activation itself, but the geometry it induces. A more flexible approach would allow this geometry to adapt during learning.

Concretely, we seek a parametrisation that:
\begin{itemize}[nosep,leftmargin=*]
\item retains the simple edge-wise structure of KANs,
\item avoids discrete basis selection or architectural search,
\item exposes geometry through a small number of continuous, interpretable parameters.
\end{itemize}

In the next section, we show that this can be achieved by promoting the exponent $p$ to a learnable parameter on each edge, yielding geometry-constrained KANs.

\section{Geometry-Constrained KANs}
\label{sec:method}

\begin{figure}[t]
\centering
\includegraphics[width=\textwidth]{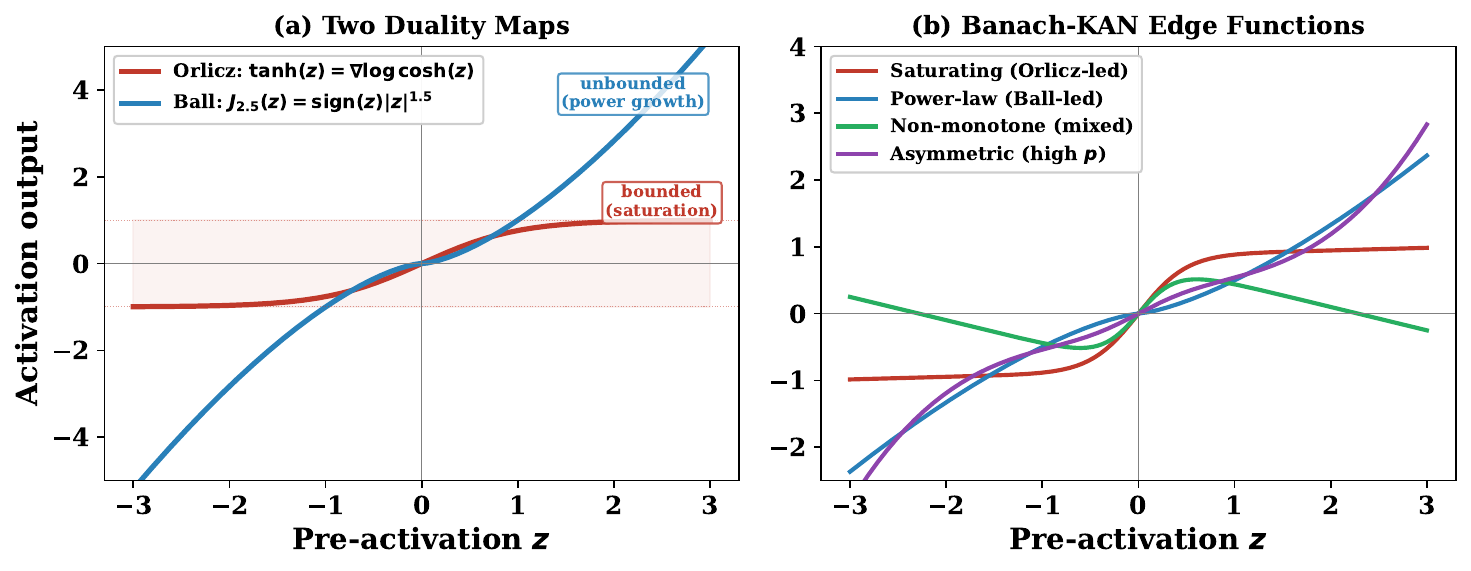}
\caption{\textbf{(a)}~The two duality maps compared: Orlicz duality ($\tanh$) saturates at $\pm 1$, providing bounded expressivity; ball duality ($J_{2.5}$) grows as $|z|^{1.5}$, providing adaptive power-law geometry. \textbf{(b)}~Banach-KAN edge functions span diverse shapes by combining both: saturating curves (Orlicz-led), power-law growth (ball-led), non-monotone functions (mixed), asymmetric responses (high $p$).}
\label{fig:duality}
\end{figure}

We now define three KAN architectures that form a controlled family, each isolating a specific aspect of the geometry--expressivity trade-off (see Figures~\ref{fig:potentials} and~\ref{fig:duality}). In all three cases, the edge function $\phi_{j,i}$ in Eq.~\eqref{eq:kan_layer} takes the form of a parameterized univariate activation with an affine pre-activation $z = wx + b$.

\subsection{Tanh-KAN: Expressivity Without Geometric Adaptation}

Each edge function is
\begin{equation}
\label{eq:tanh_edge}
  \phi(x) = a \cdot \tanh(wx + b) + d,
\end{equation}
with $4$ learnable parameters $\{a, w, b, d\}$ per edge.

As discussed in Section~\ref{sec:background}, $\tanh$ arises from Orlicz duality, and is also one of the most widely studied and deployed activations in practice: the canonical gating nonlinearity in LSTM and GRU units~\cite{hochreiter1997lstm,cho2014learning}, theoretically analysed in classical feedforward training studies~\cite{lecun1998efficient}, and a universal approximator for sigmoidal networks~\cite{cybenko1989,hornik1989}. Because Banach-KAN contains this branch as its $a_2 = 0$ special case (\Cref{tab:family}), the family inherits universal approximation, so expressivity rather than universality is the binding constraint at a fixed budget. This yields bounded, smooth edge functions with strong expressivity, but offers no \emph{learnable} geometric constraint.

\subsection{$\ell^p$-KAN: Geometric Adaptation}

Each edge function is
\begin{equation}
\label{eq:lp_edge}
  \phi(x) = a \cdot J_p(wx + b) + d, \qquad J_p(z) = \operatorname{sign}(z)(|z| + \varepsilon)^{p-1},
\end{equation}
with $5$ learnable parameters $\{a, w, b, d, p\}$ per edge; here $\varepsilon = 10^{-8}$ is a numerical smoothing, and the everywhere-$C^1$ form of $J_p$ actually used in training is given in \Cref{sec:implementation}. The exponent is parameterised as $p = 1 + \exp(\theta)$ with $\theta \in \mathbb{R}$, allowing unconstrained optimisation.

The exponent $p$ governs how each edge responds to its input. Figure~\ref{fig:duality} shows how varying $p$ produces qualitatively different behaviours: sharp, threshold-like responses for $p \to 1$, approximately linear behaviour near $p=2$, and flatter responses near the origin for larger $p$. Unlike fixed-basis constructions, this control is continuous and learned directly from data.

This makes $p$ a compact descriptor of local regularity: instead of selecting a basis or tuning a regularization parameter, the model adapts its geometry directly.

\subsection{Banach-KAN: Dual-Geometry Composition}
\label{sec:banachkan}

The two constructions above correspond to distinct duality structures: Orlicz geometry (bounded, saturating) and $\ell^p$ geometry (unbounded, power-law). A natural question is whether these can be combined in a principled way.

Both activations are gradients of convex potentials:
\[
\tanh = \nabla \Psi_1, \quad \Psi_1(z) = \log\cosh(z), \qquad
J_p = \nabla \Psi_2, \quad \Psi_2(z) = \frac{|z|^p}{p}.
\]

The Banach-KAN edge combines them:
\begin{equation}
\label{eq:banach_edge}
  \phi(x) = a_1 \tanh(w_1 x + b_1) + a_2 J_p(w_2 x + b_2) + d,
\end{equation}
with $8$ learnable parameters $\{a_1, a_2, w_1, w_2, b_1, b_2, d, p\}$ per edge.

If both branches share the same pre-activation $u = wx + b$, the edge reduces to $\nabla(\Psi_1 + \Psi_2)(u)$. By standard convex duality~\cite{bauschke2017convex}, the corresponding dual representation is given by the infimal convolution $\Psi_1^* \square \Psi_2^*$, producing a principled blend of entropic (near-zero) and power-law (tail) behaviour.

In practice, the architecture uses independent affine parameters $(w_1,b_1)$ and $(w_2,b_2)$, which breaks this exact identity but strictly generalizes it: each geometry can attend to a different projection of the input. This trades exact convex structure for increased expressivity while retaining the geometric interpretation. We are explicit that the reported model is only loosely tied to the duality construction: the tied-affine edge that preserves the exact $\nabla(\Psi_1+\Psi_2)$ identity is reported as a control in \Cref{app:tied}, where it is competitive---matching or beating the independent form under moderate noise and degrading more gently---but trails on clean and small-sample data, so the independent form remains the model we report.

The resulting edge operates in a product of two geometric families, combining bounded and unbounded responses. Structurally, $\tanh$ has bounded derivative while $J_p$ grows as a power law, so the combined gradient spans response regimes that neither branch reaches alone---which is why the conjunction, rather than either branch, is the operative model (\Cref{sec:dissection}). Figure~\ref{fig:duality} shows that this produces a diverse class of behaviours not achievable by either geometry alone.

\subsection{Geometry-dependent sensitivity}
\label{sec:reg}

The exponent $p$ does not only change the shape of the edge function; it also controls how perturbations are amplified through the activation. Consider an $\ell^p$ edge
\[
  \phi(x) = a J_p(wx+b) + d,
  \qquad
  J_p(z)=\operatorname{sign}(z)(|z|+\varepsilon)^{p-1}.
\]
The smoothed duality map has derivative
\[
  J_p'(z)
  =
  (p-1)(|z|+\varepsilon)^{p-2}, \qquad z\neq 0.
\]
On a bounded input domain $|x|\le R$ with $|z|\le M:=|w|R+|b|$, the edge $\phi(x)=a J_p(wx+b)+d$ is therefore Lipschitz with
\begin{equation}
\label{eq:lip_lp}
  \operatorname{Lip}(\phi)
  \le
  |a|\,|w|\,(p-1)
  \sup_{|z|\le M} (|z|+\varepsilon)^{p-2}.
\end{equation}

For Banach-KAN, the $\tanh$ branch is $|a_1||w_1|$-Lipschitz, since $|\tanh'(z)|\le 1$. Combining the two branches gives
\begin{equation}
\label{eq:lip}
  \operatorname{Lip}(\phi)
  \le
  |a_1||w_1|
  +
  |a_2||w_2|(p-1)
  \sup_{|z|\le M_2} (|z|+\varepsilon)^{p-2}.
\end{equation}

The bound in Eq.~\eqref{eq:lip_lp} is non-monotone in $p$: it is small for $p$ near $2$ and large in both limits $p\to 1^+$ (because $\varepsilon^{p-2}\to\infty$) and $p\to\infty$ (because $M^{p-2}\to\infty$ for $M>1$).

\subsection{What the geometry predicts: trainable depth}
\label{sec:depth}

The sensitivity analysis above is local. A complementary, global consequence of the exponent is a condition for trainability at depth, and it is the one place where the geometric perspective makes a derived, testable prediction rather than motivating a form. Consider the exact map $J_p$ in a plain feedforward stack with Gaussian preactivations, and let $\chi_1$ be the mean squared singular value of a layer's Jacobian---the standard order parameter for signal propagation.

\begin{proposition}[Trainable depth]
\label{prop:depth}
For the exact duality map $J_p(z)=\operatorname{sign}(z)|z|^{p-1}$ in a plain feedforward network whose preactivations are Gaussian with any variance $q>0$,
\[
  \chi_1 \;=\; \sigma_w^2\,(p-1)^2\, q^{\,p-2}\;\mathbb{E}_{Z\sim\mathcal N(0,1)}\!\left[|Z|^{2p-4}\right],
\]
which is finite if and only if $p>3/2$, for every $q$. When $p>3/2$ a unique critical weight scale $\sigma_w$ solving $\chi_1=1$ exists; at or below $p=3/2$ none exists at any $q$.
\end{proposition}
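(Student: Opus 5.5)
The plan is to start from the standard mean-field definition of the Jacobian order parameter, $\chi_1=\sigma_w^2\,\mathbb{E}_{h\sim\mathcal N(0,q)}[\phi'(h)^2]$, and substitute $\phi=J_p$. Away from the origin the exact duality map has derivative $J_p'(h)=(p-1)|h|^{p-2}$. The origin has Gaussian measure zero, so this a.e.\ derivative is the one entering the expectation. Writing $h=\sqrt q\,Z$ with $Z\sim\mathcal N(0,1)$ gives $J_p'(h)^2=(p-1)^2 q^{p-2}|Z|^{2p-4}$. Pulling out the constants yields the displayed formula. This step is routine.

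Next I settle finiteness. The integrand $|z|^{2p-4}e^{-z^2/2}$ is integrable at infinity for every $p$ because of the Gaussian tail. At the origin, $\int_0^1 z^{2p-4}\,dz$ is finite if and only if $2p-4>-1$, that is, $p>3/2$. For $p\le 3/2$ the integral diverges, so $\chi_1=+\infty$. The prefactors $(p-1)^2>0$ and $q^{p-2}\in(0,\infty)$ cannot change this, for any $q>0$. I would give the closed form $\mathbb{E}|Z|^{2p-4}=2^{p-2}\Gamma(p-\tfrac32)/\sqrt{\pi}$ via the substitution $u=z^2/2$. This makes the threshold explicit through the pole of $\Gamma$ at $0$.

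For the critical scale, fix $p>3/2$ and $q>0$. Then $C(p,q):=(p-1)^2q^{p-2}\,\mathbb{E}|Z|^{2p-4}$ is a finite positive constant, so $\chi_1=\sigma_w^2C$ is strictly increasing in $\sigma_w>0$. Hence $\sigma_w^*=C^{-1/2}$ is the unique solution of $\chi_1=1$. For $p\le 3/2$, $\chi_1=\infty$ for every $\sigma_w>0$ and every $q$, so no solution exists.

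The main obstacle is interpretive rather than computational. In the full mean-field picture, $q$ is the fixed point of the variance map $q=\sigma_w^2\mathbb{E}[J_p(\sqrt q Z)^2]+\sigma_b^2$, and $\sigma_w$ and $q$ are coupled. I would sidestep this by proving the statement as written, for every fixed $q$: uniqueness is then uniqueness of $\sigma_w$ given $q$, and nonexistence holds uniformly in $q$. The other subtlety is $p<2$, where $J_p'$ is unbounded at $0$ yet still locally integrable when $p>3/2$. Here I would check that using the a.e.\ derivative is legitimate. Since $J_p$ is absolutely continuous for $p>1$, the mean-squared Jacobian is exactly this expectation, and I would note this explicitly.
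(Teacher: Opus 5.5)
Your proposal is correct and follows essentially the same route as the paper: write $\chi_1=\sigma_w^2\,\mathbb{E}[J_p'(\sqrt{q}\,Z)^2]$, reduce finiteness to the moment $\mathbb{E}|Z|^{2p-4}$, which is finite iff $2p-4>-1$, and then solve $\chi_1=1$, which is linear in $\sigma_w^2$, for the unique critical scale. Your direct integrability check at the origin is the right justification for the threshold, and your notes on the a.e.\ derivative and on holding $q$ fixed are sound additions the paper leaves implicit; one caveat is that the closed form $2^{p-2}\Gamma(p-\tfrac32)/\sqrt{\pi}$ is valid only for $p>3/2$, since for $1<p<3/2$ the analytically continued $\Gamma(p-\tfrac32)$ is finite even though the integral diverges.
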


\begin{proof}
With weights i.i.d.\ $\mathcal N(0,\sigma_w^2/\text{fan-in})$ the preactivation is $z=\sqrt q\,Z$, $Z\sim\mathcal N(0,1)$, and $\chi_1=\sigma_w^2\,\mathbb E[J_p'(z)^2]$ with $J_p'(z)=(p-1)|z|^{p-2}$; hence $\chi_1=\sigma_w^2(p-1)^2 q^{\,p-2}\,\mathbb E|Z|^{2p-4}$. Since $\mathbb E|Z|^{s}=2^{s/2}\Gamma(\tfrac{s+1}{2})/\sqrt\pi$ is finite iff $s>-1$, the moment $\mathbb E|Z|^{2p-4}=2^{\,p-2}\Gamma(p-\tfrac32)/\sqrt\pi$ is finite iff $p>3/2$, for every $q$. For $p>3/2$ the coefficient of $\sigma_w^2$ is a finite positive constant, so $\chi_1=1$ has the unique solution $\sigma_w^2=[(p-1)^2 q^{\,p-2}\mathbb E|Z|^{2p-4}]^{-1}>0$; at or below $p=3/2$ the moment diverges, so $\chi_1=+\infty$ for every $\sigma_w>0$ and no critical scale exists at any $q$.
\end{proof}

\begin{remark}
\label{rem:depth}
The implemented $\varepsilon$- and $C^1$-regularised maps keep $\chi_1$ finite for all $p>1$, but for $p<3/2$ the critical scale behaves as $\sigma_w \sim \varepsilon^{(3-2p)/2}$ and vanishes as $\varepsilon\to0$: sub-Euclidean edges have no $\varepsilon$-independent critical initialisation---the same near-origin blow-up that triggers the clipping in \Cref{sec:implementation}. This is a mechanism, not a proof of untrainability, since residual connections can preserve trainability off criticality; it nonetheless \emph{predicts} $p>3/2$ as the trainable-depth condition, a claim that bites only in a deep stack and whose direct deep test we leave to future work. The shallow symbolic-regression models trained here carry only $1.2$--$5.4\%$ sub-$3/2$ edges (\Cref{app:learned_geometry}), so no counterexample arises.
\end{remark}

\begin{table}[!t]
\centering
\caption{Members of the geometry-constrained KAN family. Top: special-value members of Banach-KAN. Bottom: standalone variants.}
\label{tab:family}
\footnotesize
\setlength{\tabcolsep}{4pt}
\begin{tabular}{@{}llll@{}}
\toprule
Method & Specialisation & Edge form $\phi(x)$ & Free parameters per edge \\
\midrule
\texttt{linear}    & $p = 2$ in $\ell^p$         & $a(wx+b) + d$                                 & $(a, w, b, d)$ \\
\texttt{tanh}      & $a_2 = 0$                   & $a\tanh(wx + b) + d$                          & $(a, w, b, d)$ \\
\texttt{lp\_fixed} & $p$ fixed                   & $aJ_p(wx + b) + d$                            & $(a, w, b, d)$ \\
\texttt{lp}        & learned $p$ in $\ell^p$     & $aJ_p(wx + b) + d$                            & $(a, w, b, d, \theta)$ \\
\midrule
\texttt{banach}    & full Orlicz $\oplus$ $\ell^p$ & $a_1\tanh(w_1 x + b_1) + a_2 J_p(w_2 x + b_2) + d$ & $(a_1, a_2, w_1, w_2, b_1, b_2, d, \theta)$ \\
\bottomrule
\end{tabular}
\end{table}

\section{Practical Implementation and Inference}
\label{sec:implementation}

We now describe how geometry-constrained KANs are fit in practice. The goal is to estimate the per-edge exponent $\theta$ together with the standard activation parameters from training data.

\textbf{Loss and optimiser.}
Given training pairs $\{(x_i, y_i)\}_{i=1}^n \subset \R^d \times \R$, we minimise the empirical mean squared error
\[
  \mathcal{L}(\Theta) \;=\; \frac{1}{n}\sum_{i=1}^n \big(f_\Theta(x_i) - y_i\big)^2,
\]
with $\Theta$ collecting the per-edge parameters $(a, w, b, d, \theta)$. Optimisation is unconstrained on $\theta$, with $p = 1 + \exp(\theta)$ enforcing $p > 1$. We use Adam~\cite{kingma2014adam} with cosine annealing for $5{,}000$ steps and gradient clipping at $10$. The optimisation is low-dimensional per edge, with $\theta$ as the only non-standard parameter.

\textbf{Initialisation from the Euclidean reference.}
The exponent admits a natural reference point at $p = 2.5$, slightly above the Euclidean value, where the activation is sufficiently nonlinear to provide useful expressivity but not so peaked as to cause numerical difficulties at $z = 0$. This corresponds to $\theta = \log(1.5) \approx 0.405$. In practice we initialise every edge from this point and rely on gradient descent to redistribute exponents across edges. Empirically, this initialisation gives stable training across all our experiments in Section~\ref{sec:experiments} and recovers a wide spread of final exponents. In the benchmark-equations experiment of Section~\ref{sec:feynman}, only $1.2\%$ of edges remain within $\pm 0.01$ of the initialisation, and edges genuinely explore both the sub-Euclidean and super-Euclidean regimes.

\textbf{Numerical stabilisation and the $C^1$ map.}
The exact $\ell^p$ duality map $J_p(z) = \sgn(z)|z|^{p-1}$ has a singular derivative at $z = 0$ for $p < 2$. The additive smoothing $\sgn(z)(|z|+\varepsilon)^{p-1}$ that appears in the display equations removes the singular value but is not itself differentiable at the origin: for $1<p<2$ it retains a jump of $2\varepsilon^{p-1}$ there, and the corresponding potential $\tfrac{1}{p}(|z|+\varepsilon)^p$ has a corner. We therefore implement the everywhere-$C^1$ duality map
\[
  J_p^{\varepsilon}(z) \;=\; z\,(z^2+\varepsilon^2)^{(p-2)/2}
  \;=\; \nabla\!\left[\tfrac{1}{p}(z^2+\varepsilon^2)^{p/2}\right],
\]
the gradient of a convex, everywhere-differentiable potential, with $\varepsilon = 10^{-8}$ and no per-task tuning. This is the correct realisation of differentiability at $z=0$ and preserves the geometric interpretation, agreeing with $|z|^{p-1}$ in the tails (\Cref{fig:stabilisation}); empirically it reproduces the additive-$\varepsilon$ numbers---clean medians within $0.5\%$ and the same noise-degradation ratios (\Cref{tab:c1})---so the correction changes no result-table entry. Gradient clipping at $10$ controls the near-origin sensitivity: the slope $J_p^{\varepsilon\prime}(0) = \varepsilon^{p-2}\approx 10^{4}$ for $(p, \varepsilon) = (1.5, 10^{-8})$, so the clip fires routinely on sub-Euclidean edges and keeps training stable.

\begin{figure}[t]
\centering
\includegraphics[width=\textwidth]{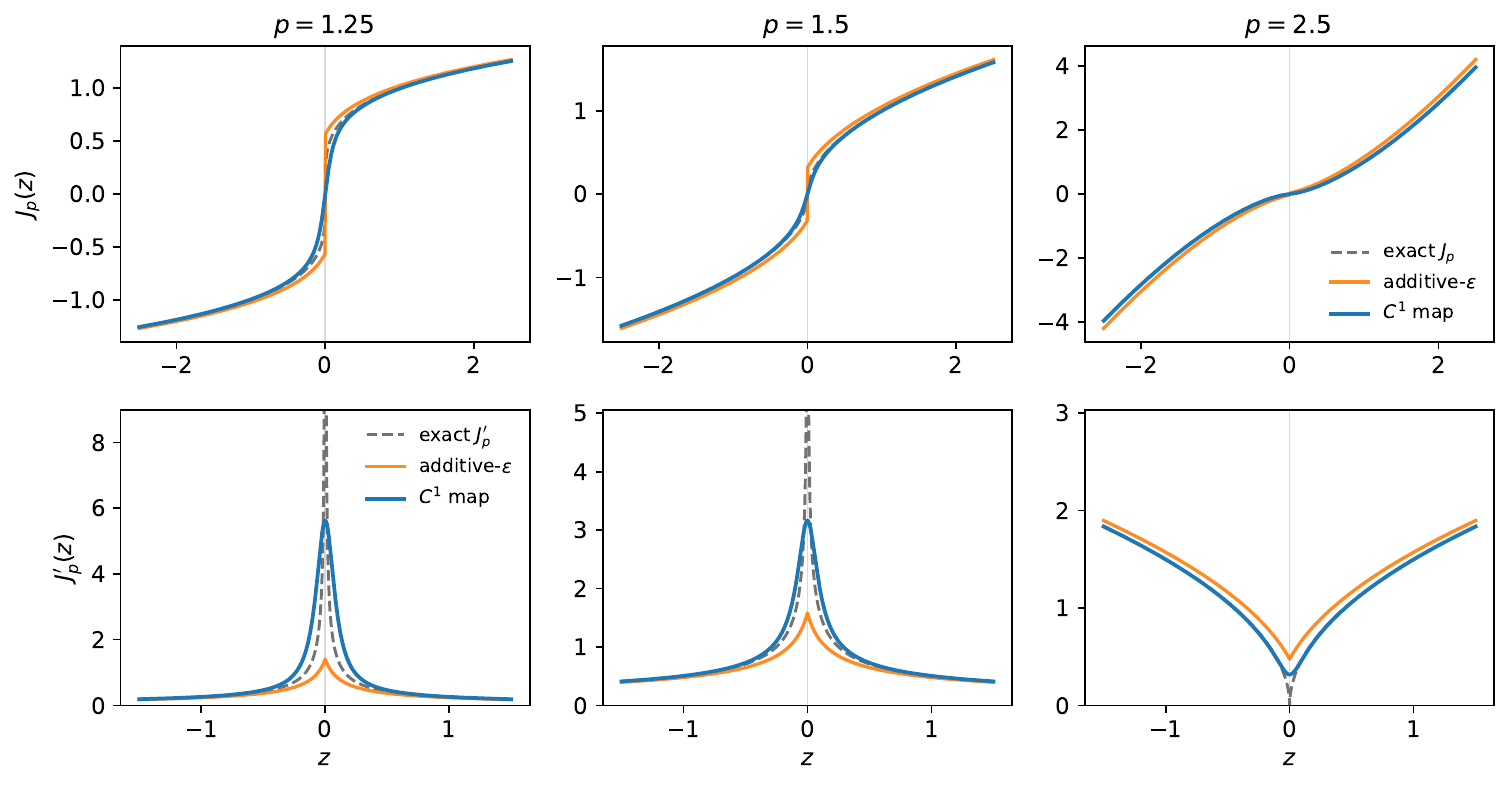}
\caption{The $C^1$ duality map approximates the exact map and regularises its gradient. \textbf{Top:} the exact map $J_p(z)=\sgn(z)|z|^{p-1}$ (dashed), the additive-$\varepsilon$ smoothing $\sgn(z)(|z|+\varepsilon)^{p-1}$, and the adopted $C^1$ map $z(z^2+\varepsilon^2)^{(p-2)/2}$ coincide away from the origin; near $z=0$ the additive-$\varepsilon$ map has a jump of $2\varepsilon^{p-1}$ for $p<2$, whereas the $C^1$ map passes through the origin smoothly. \textbf{Bottom:} the gradient---the exact $(p-1)|z|^{p-2}$ is singular at $z=0$ for $p<2$, and the $C^1$ map replaces it with a bounded, smooth peak of height $\varepsilon^{p-2}$, the quantity the gradient clip acts on. A visible $\varepsilon=0.1$ is shown for legibility; the experiments use $\varepsilon=10^{-8}$, for which the curves are indistinguishable outside an $\varepsilon$-neighbourhood of $0$.}
\label{fig:stabilisation}
\end{figure}

\begin{table}[H]
\centering
\caption{The $C^1$ map reproduces the $\varepsilon$-map: noise-degradation ratio ($\sigma{=}0\to1$) under three realisations of $J_p$; clean medians agree within $0.5\%$, so the differentiability correction changes no result-table entry.}
\label{tab:c1}
\footnotesize
\setlength{\tabcolsep}{10pt}
\begin{tabular}{lcc}
\toprule
Realisation of $J_p$ & $\ell^p$ degr. & Banach degr. \\
\midrule
additive $\varepsilon$-map (paper) & 3.7$\times$ & 8.8$\times$ \\
Adam re-run                        & 3.5$\times$ & 8.9$\times$ \\
$C^1$ map (adopted)                & 3.6$\times$ & 8.7$\times$ \\
\bottomrule
\end{tabular}
\end{table}

\textbf{Remarks.}
Additional implementation details are provided in the appendices: the preregistered Spline--Adam control schedule (\Cref{app:sweep}), per-equation training-time breakdowns (\Cref{app:params}), and the CV protocol for the spline regularisation baseline (\Cref{app:spline_reg}).

\section{Experiments}
\label{sec:experiments}

In this section we report results of geometry-constrained KANs on synthetic and real datasets.

\textbf{Protocol.}
Symbolic regression --- predominantly the AI Feynman benchmark~\cite{udrescu2020ai} augmented with custom stress-test targets --- is the primary evaluation; PMLB~\cite{romano2022pmlb} is a tabular sanity check. All benchmark-equation results report median NRMSE $= \mathrm{RMSE}/\mathrm{std}(y_{\mathrm{test}})$; all methods share the same train/test splits and input normalisation.

\textbf{Baselines.}
Four comparators: Spline G=3 (B-spline KAN~\cite{liu2024kan} via pykan, $G=3$, $k=3$, L-BFGS), Spline CV (cross-validated regularised B-spline KAN; details in \Cref{app:spline_reg}), Cheby-KAN~\cite{ss2024chebyshev} (degree-$4$ Chebyshev polynomials), and a $50$-unit ReLU MLP. An L-BFGS-vs-Adam optimiser control for the spline baseline (\Cref{app:sweep}) does not affect the overall results.

\textbf{Datasets.}
\begin{itemize}[nosep,leftmargin=*]
\item \emph{Benchmark equations}: $50$ targets in $2$--$6$D, $40$ from AI Feynman~\cite{udrescu2020ai} and $10$ synthetic stress tests (\Cref{app:equations}). Types: polynomial, rational, trigonometric, compositional, exponential, logarithmic, hyperbolic. The Core 18 subset is used for noise, small-sample, and fixed-$p$ sweeps.
\item \emph{PMLB}~\cite{romano2022pmlb}: $12$ regression datasets, $d \in [2, 10]$, $n \in [108, 1{,}000]$, covering geophysics, pollution monitoring, computing, and epidemiology.
\end{itemize}
All KAN models use architecture $[d, K, 1]$ with $K = 5$ for $d \le 2$, $K = 4$ for $d \in \{3,4\}$, and $K = 3$ for $d \ge 5$. Counted as trainable weights, the pykan spline uses $8$ per edge---the same as Banach-KAN, and more than Cheby-KAN's $7$ (the spline's larger nominal footprint is non-trained grid, mask, and bias buffers)---so we make no parameter-efficiency claim; the fairness of the comparison rests instead on the matched-budget controls of \Cref{sec:controls}. The full $50$-equation list (\Cref{app:equations}), PMLB descriptions (\Cref{app:pmlb}), per-dimension parameter counts (\Cref{app:params}), and a pendulum-period interpretability case study (\Cref{app:pendulum}; a near-boundary logarithmic singularity that local-basis splines cannot extrapolate to) are provided in the appendices.

\begin{table}[!t]
\centering
\caption{Clean-data baseline ($n = 500$, $\sigma = 0$). Two column groups: ``Full $50$ equations'' uses the complete benchmark, ``Core $18$ equations'' uses the subset re-used in the noise and small-sample sweeps.}
\label{tab:baseline50}
\small
\setlength{\tabcolsep}{6pt}
\begin{tabular}{l ccc ccc}
\toprule
& \multicolumn{3}{c}{Full $50$ equations} & \multicolumn{3}{c}{Core $18$ equations} \\
\cmidrule(lr){2-4} \cmidrule(lr){5-7}
Model & Median NRMSE & Wins & Rank & Median NRMSE & Wins & Rank \\
\midrule
Spline G=3 & 0.037          & \textbf{21} & 2.34          & 0.037          & \textbf{7} & 2.39 \\
Tanh       & 0.067          & 0           & 5.18          & 0.089          & 0          & 5.28 \\
Banach     & \textbf{0.030} & 11          & \textbf{2.32} & 0.030          & 6          & \textbf{2.00} \\
$\ell^p$   & 0.060          & 0           & 4.78          & 0.068          & 0          & 4.72 \\
MLP        & 0.049          & 8           & 3.80          & 0.048          & 4          & 3.89 \\
Cheby      & \textbf{0.030} & 10          & 2.58          & \textbf{0.029} & 1          & 2.72 \\
\bottomrule
\end{tabular}
\end{table}

\subsection{Benchmark equations}
\label{sec:feynman}

\textbf{Baseline.} Table~\ref{tab:baseline50} shows three diagnostic patterns. First, Spline G=3 takes the most wins ($21/50$) yet loses on median NRMSE: a bimodal error signature in which the spline is sharp where its grid resolution suffices and brittle elsewhere. Cheby-KAN is the inverse---uniformly close to the front, rarely strictly best, reflecting global polynomial expressivity without local adaptivity. The wins-vs-median gap therefore separates sharp-but-fragile bases from uniformly competent ones. Second, Banach-KAN attains the best average rank on both subsets ($2.32$ on $50$ eq, $2.00$ on the core $18$) and is the only family member that wins consistently under both aggregations; Tanh-KAN and $\ell^p$-KAN, each carrying only one ingredient (bounded saturation or adaptive power-law geometry), never win outright on clean data, so the Banach conjunction is what is operative. Third, MLP becomes competitive only at $20$--$30\times$ more parameters, so its ranks are bought rather than earned at parity.
Stratifying by input dimension makes the regime explicit: Spline G=3 dominates at low $d$ where its $7$ control points per edge cover the input, while at $d \ge 4$ the budget runs out and wins fragment across Banach, Cheby, and MLP with no single dominant model---exactly the regime where adaptive geometry should pay over a fixed prior (\Cref{app:params}).

Learned exponents on the same set are interpretable and target-specific. Across $4{,}465$ Banach-KAN edges, $18.7\%$ learn $p<2$, only $1.2\%$ remain within $\pm 0.01$ of the initialisation $p=2.5$, and the within-run standard deviation of $p$ exceeds $0.5$ in $77\%$ of runs---edges specialise rather than collapse. The distribution by equation type is consistent across runs: families with sharper local behaviour (logarithmic, hyperbolic, exponential, trigonometric) push edges below $p=2$, while smoother polynomial and power targets stay above. The sub-Euclidean fraction grows monotonically with input dimension ($11\%$ at $2$D to $34\%$ at $6$D), and exponents are stable under noise (mean absolute shift from $\sigma=0$ to $\sigma=0.3$ is $0.094$), so $p$ tracks the target rather than the noise level (\Cref{app:learned_geometry}). We are precise about what this asserts: the per-equation mean exponent is seed-noisy (between-equation spread exceeds within-equation spread), and the sub-Euclidean fraction is stable across seeds (across-seed SD $0.079$) only at a fixed initialisation---initialising near $p=2$ leaves almost none---so the claim is a relative, aggregate, init-anchored signal (sharper targets receive lower $p$), not a per-edge or absolute-threshold property.

\begin{table}[t]
\centering
\caption{Left: median NRMSE under noise at $n = 500$ (best per column in \textbf{bold}). Right: number of equations won at $\sigma = 0$ for each sample size (out of $18$). $5$ seeds.}
\label{tab:robustness}
\footnotesize
\setlength{\tabcolsep}{4pt}
\begin{tabular}{l ccccc ccccc}
\toprule
& \multicolumn{5}{c}{Noise robustness ($n = 500$, median NRMSE)} & \multicolumn{5}{c}{Small-sample wins out of $18$ ($\sigma = 0$)} \\
\cmidrule(lr){2-6} \cmidrule(lr){7-11}
Model & $\sigma\!=\!0$ & $0.1$ & $0.3$ & $0.5$ & $1.0$ & $n\!=\!50$ & $100$ & $200$ & $500$ & $1000$ \\
\midrule
Spline G=3        & 0.037          & 0.077          & 0.197          & 0.346          & 0.789          & 3          & 2 & 4          & 2          & 6          \\
Spline CV         & 0.036          & 0.054          & 0.132          & 0.187          & 0.403          & 0          & 3 & 1          & \textbf{6} & 1          \\
Tanh              & 0.089          & 0.094          & 0.112          & 0.185          & 0.283          & 0          & 0 & 0          & 0          & 0          \\
Banach            & 0.030          & \textbf{0.041} & \textbf{0.098} & 0.145          & 0.263          & \textbf{8} & \textbf{7} & \textbf{5} & 5          & 3          \\
$\ell^p$          & 0.068          & 0.077          & 0.108          & \textbf{0.138} & \textbf{0.249} & 3          & 2 & 0          & 0          & 0          \\
MLP               & 0.048          & 0.073          & 0.148          & 0.200          & 0.381          & 4          & 3 & 3          & 4          & 4          \\
Cheby             & \textbf{0.029} & 0.047          & 0.120          & 0.202          & 0.435          & 0          & 1 & \textbf{5} & 1          & 4          \\
\bottomrule
\end{tabular}
\end{table}

\textbf{Noisy data.} Table~\ref{tab:robustness} (left) sweeps $\sigma \in \{0, 0.1, 0.3, 0.5, 1.0\}$ at $n = 500$. Fixed-basis models degrade more under measurement noise (Spline G=3 $21.6\times$, Cheby-KAN $14.9\times$ from $\sigma=0$ to $\sigma=1.0$) than the geometry-constrained KANs ($\ell^p$-KAN $3.7\times$, Tanh-KAN $3.2\times$, Banach-KAN $8.8\times$). Among the three geometry-constrained variants, Banach-KAN is the lowest-NRMSE model at low noise, whereas at $\sigma=1.0$ the $\ell^p$ and Tanh variants overtake it. The comparison holds against recent fixed-basis KANs as well: adding RBF-~\cite{li2024rbfkan}, Fourier-~\cite{xu2024fourierkan}, Wavelet-~\cite{bozorgasl2024wav}, and Jacobi-KAN~\cite{aghaei2024fkan} at matched per-edge budget under a single Adam protocol, all four degrade by $10$--$17\times$ from $\sigma=0$ to $\sigma=1$, joining Spline and Cheby, while the geometry family degrades least; averaged over the $\sigma$-sweep Banach-KAN holds the best rank ($2.12$), ahead of every fixed basis, and at $\sigma\ge0.5$ no fixed basis wins a single equation (\Cref{tab:recent_noise}). Per-equation breakdowns are in \Cref{app:noise_per_eq}.

\begin{table}[t]
\centering
\caption{Median NRMSE under noise, unified Adam (Core~$18$, $5$ seeds); ``Degr.'' is the $\sigma{=}0\to1$ ratio. Right column: mean rank over the $\sigma$-sweep $\{0,0.1,0.3,0.5,1.0\}$ across the eight edge models.}
\label{tab:recent_noise}
\footnotesize
\setlength{\tabcolsep}{6pt}
\begin{tabular}{lccccc c}
\toprule
Model (params/edge) & $\sigma{=}0$ & $0.3$ & $0.5$ & $1.0$ & Degr. & Rank ($\sigma$-sweep) \\
\midrule
Banach ($8$)   & \textbf{0.029} & \textbf{0.095} & \textbf{0.149} & 0.259 & \textbf{8.9$\times$} & \textbf{2.12} \\
$\ell^p$ ($5$) & 0.071 & 0.110 & 0.140 & \textbf{0.250} & \textbf{3.5$\times$} & 4.03 \\
RBF ($7$)      & 0.031 & 0.110 & 0.193 & 0.411 & 13.2$\times$ & 3.51 \\
Cheby ($7$)    & \textbf{0.029} & 0.119 & 0.204 & 0.435 & 14.9$\times$ & 4.29 \\
Jacobi ($7$)   & 0.034 & 0.118 & 0.198 & 0.453 & 13.5$\times$ & 4.51 \\
Wavelet ($6$)  & 0.047 & 0.137 & 0.229 & 0.489 & 10.4$\times$ & 6.40 \\
Fourier ($8$)  & 0.031 & 0.134 & 0.233 & 0.515 & 16.5$\times$ & 6.12 \\
spline ($8$)   & 0.018 & 0.140 & 0.239 & 0.625 & 34.8$\times$ & 5.01 \\
\bottomrule
\end{tabular}
\end{table}

\textbf{Small-sample approximation.} Table~\ref{tab:robustness} (right) sweeps $n \in \{50, 100, 200, 500, 1000\}$ at $\sigma = 0$ and reports per-condition wins (out of $18$). \emph{Banach-KAN approximates targets better than basis-based KANs in the small-sample regime}: $8/18$ wins at $n=50$ and $7/18$ at $n=100$, the most of any single model. Basis-based KANs need data---Spline G=3 climbs from $3/18$ at $n=50$ to $6/18$ at $n=1000$; Cheby-KAN from $0$ to $4$; Spline CV takes most of its wins at $n=500$ ($6/18$). Across the $90$ cells, Banach-KAN wins $28$ (Spline G=3 $17$, Spline CV $11$, MLP $18$, Cheby $11$, $\ell^p$ $5$, Tanh $0$); Banach-KAN is the only family member uniformly competitive across the budget range (per-equation breakdowns in \Cref{app:scarcity_per_eq}).

\textbf{Architecture dissection.}\label{sec:dissection}
The three geometry-constrained models---Tanh-KAN (bounded expressivity, no geometric adaptation), $\ell^p$-KAN (geometric adaptation, no bounded saturation), and Banach-KAN (the conjunction)---form a controlled ablation of the expressivity--geometry trade-off. In the three-way comparison (\Cref{app:dissection_per_eq}), Banach wins clean data, $\ell^p$ wins in the scarce noisy regime, and Tanh contributes under moderate noise. Freezing $a_2 = 0$ disables $J_p$ while keeping $\tanh$ trainable: median clean-data NRMSE degrades by $1.91\times$, all $18/18$ equations worsen, with the largest gap $5.0\times$ (\Cref{app:a2zero}). Within $\ell^p$-KAN, learned $p$ wins $17/18$ against any fixed $p$ choice (\Cref{app:learned_geometry}), confirming that the departure from Euclidean geometry---not a power-law activation per se---is the operative mechanism. A single-edge probe (\Cref{app:toy}) shows Banach reproducing four univariate targets (sigmoid, cubic, scaled Bessel $J_0$, exponential) at NRMSE $<0.04$ while $\ell^p$ fails on the bounded sigmoid---direct evidence that the $\tanh$ branch supplies expressivity the $J_p$ branch lacks.

\textbf{Isolating geometry from parametric flexibility.}\label{sec:controls}
Banach-KAN can also be read as a compact parametric edge, which raises the question of whether its gains come from the geometry or simply from a more flexible activation with more parameters. We isolate this with a panel of parametric-activation controls---a learnable-power edge, a rational/Pad\'e edge~\cite{molina2020pade,boulle2020rational}, a mixture of standard activations~\cite{manessi2018learning}, and a slope-adaptive edge in the style of adaptive activations~\cite{jagtap2020adaptive}---each at its family's natural per-edge budget under one Adam/cosine/gradient-clip protocol (Core~$18$, $5$ seeds). Banach's $8$ parameters per edge sit at the upper end of this range; the two controls at its own budget, mixture ($7$) and rational ($8$), still lose ($0.052$ and $0.066$ clean median NRMSE against Banach's $0.029$), so the effect is not a parameter-count artefact. Against the four controls Banach-KAN wins clean $15/18$ (rank $1.17$ within the seven-model set) and small-sample $60/90$; a bare learnable-power edge sits close to $\ell^p$-KAN and far from Banach ($0.086$), and the slope-adaptive edge, swept over three learning rates and three slope initialisations, is best at $\sim\!2.8\times$ Banach's error ($0.071$ vs $0.026$)---never competitive, and not a tuning artefact. Together with the single-scalar learned-$p$ vs fixed-$p$ test (learned $p$ wins $17/18$), this places the gain on the learned geometry rather than on generic parametric flexibility (\Cref{tab:controls}).

\begin{table}[t]
\centering
\caption{Parametric-activation controls, clean data, unified Adam (Core~$18$, $5$ seeds). ``Rank'' is the mean rank within the seven-model set; ``Small-sample'' is wins out of $90$. The Banach--mixture clean gap is $-0.023$ ($95\%$ CI $[-0.039,-0.001]$) by paired equation-bootstrap.}
\label{tab:controls}
\footnotesize
\setlength{\tabcolsep}{8pt}
\begin{tabular}{lcccc}
\toprule
Edge (params/edge) & Clean NRMSE & Wins/$18$ & Rank (7-model) & Small-sample/$90$ \\
\midrule
Banach ($8$)               & \textbf{0.029} & \textbf{15} & \textbf{1.17} & \textbf{60} \\
mixture ($7$)              & 0.052 & 3 & 2.39 & 16 \\
rational/Pad\'e ($8$)      & 0.066 & 0 & 4.22 & 2  \\
$\ell^p$ ($5$)             & 0.071 & 0 & 4.33 & 6  \\
learnable-power ($5$)      & 0.086 & 0 & 4.89 & 2  \\
$\tanh$ ($4$)              & 0.089 & 0 & 5.00 & 2  \\
slope-adaptive ($5$)       & 0.105 & 0 & 6.00 & 2  \\
\bottomrule
\end{tabular}
\end{table}

\textbf{Extrapolation.}\label{sec:extrap}
Because $J_p$ is globally power-law, geometry-constrained edges carry a controlled behaviour outside the training range that local bases lack. On a Core-$18$ protocol that trains on the inner $60\%$ of each input box and tests out-of-range ($3$ seeds), Banach-KAN and $\ell^p$-KAN attain the lowest median NRMSE ($0.35$ and $0.42$), ahead of Spline ($0.73$), Cheby ($0.80$), RBF ($0.86$), and Fourier ($1.66$); the bounded Tanh edge is worst among the geometry variants ($1.10$), consistent with saturation. This mirrors the pendulum-period case study of \Cref{app:pendulum}, where a near-boundary logarithmic singularity defeats local-basis splines (\Cref{tab:extrap}).

\begin{table}[t]
\centering
\caption{Extrapolation (train inner $60\%$, test out-of-range): median NRMSE and per-model wins out of $18$.}
\label{tab:extrap}
\footnotesize
\setlength{\tabcolsep}{10pt}
\begin{tabular}{lcc}
\toprule
Model & NRMSE & Wins/$18$ \\
\midrule
Banach   & \textbf{0.35} & \textbf{7} \\
$\ell^p$ & 0.42 & 6 \\
spline   & 0.73 & 0 \\
Cheby    & 0.80 & 3 \\
RBF      & 0.86 & 0 \\
$\tanh$  & 1.10 & 2 \\
Fourier  & 1.66 & 0 \\
\bottomrule
\end{tabular}
\end{table}

\textbf{Scaling and scope.}\label{sec:scale}
Our scope is deliberately low-to-mid-dimensional regression, and we do not compete with deep architectures on their own ground, where an MLP matches these targets only at $20$--$30\times$ the parameters (\Cref{tab:baseline50}). As a boundary probe, replacing only the final-layer activation of a ResNet backbone by the geometry map---leaving all other activations as ReLU---gives Banach-KAN $0.881$ vs ReLU $0.911$ on CIFAR-10 and $0.598$ vs $0.634$ on CIFAR-100, with a larger gap on STL-10 ($0.44$--$0.49$ vs $0.79$). This is a readout rather than a depth test: it sits outside \Cref{prop:depth}, and the plausible limiter is the sensitivity of an unbounded power-law readout, whose Lipschitz constant grows with $p$ (\Cref{sec:reg}), which we flag as a hypothesis. The faithful inner-layer variant did not complete within budget, so a direct depth test remains future work. We report the negative to mark the boundary of the regime, not because we entered a contest for it (\Cref{tab:scale}).

\begin{table}[t]
\centering
\caption{Test accuracy, matched parameters; final-layer geometry only, ReLU backbone.}
\label{tab:scale}
\footnotesize
\setlength{\tabcolsep}{10pt}
\begin{tabular}{lccc}
\toprule
Dataset & Geometry (readout) & ReLU & Gap \\
\midrule
CIFAR-10  & Banach $0.881$ & $0.911$ & $-3.0$ pt \\
CIFAR-100 & Banach $0.598$ & $0.634$ & $-3.6$ pt \\
STL-10    & $0.44$--$0.49$ & $0.79$  & $-30$ pt \\
\bottomrule
\end{tabular}
\end{table}

\textbf{Beyond symbolic regression: tabular data.}\label{sec:beyond} To check that the mechanism extends beyond synthetic targets, we evaluate on $12$ PMLB regression datasets~\cite{romano2022pmlb} ($d \in [2,10]$, $n \in [108, 1{,}000]$), $5$ seeds, $80/20$ split, standardised features and target.

\begin{table}[!t]
\centering
\caption{Per-dataset median NRMSE ($5$ seeds, lower is better, best per row in \textbf{bold}) on $12$ PMLB regression datasets. The geometry-constrained family wins $10/12$ datasets; Spline G=3 wins zero. Last two rows aggregate over the table.}
\label{tab:pmlb}
\footnotesize
\setlength{\tabcolsep}{4pt}
\begin{tabular}{lcccccccc}
\toprule
Dataset & Domain & $d$ & $n$ & Spline & Tanh & Banach & $\ell^p$ & MLP \\
\midrule
\texttt{712\_chscase\_geyser1}      & geophysics    & 2  & 222  & 1.02 & 0.46          & 0.48 & \textbf{0.46} & 0.48 \\
\texttt{678\_vis\_environmental}    & environmental & 3  & 111  & 1.80 & \textbf{0.86} & 0.99 & 0.91          & 0.92 \\
\texttt{1027\_ESL}                  & ordinal       & 4  & 488  & 0.52 & \textbf{0.37} & 0.40 & 0.41          & 0.40 \\
\texttt{1030\_ERA}                  & ordinal       & 4  & 1000 & 0.81 & \textbf{0.81} & 0.81 & 0.81          & 0.81 \\
\texttt{210\_cloud}                 & meteorology   & 5  & 108  & 0.69 & 0.41          & 0.51 & 0.41          & \textbf{0.35} \\
\texttt{230\_machine\_cpu}          & computing     & 6  & 209  & 0.76 & \textbf{0.27} & 0.31 & 0.33          & 0.27 \\
\texttt{665\_sleuth\_case2002}      & biology       & 6  & 147  & 2.01 & 1.14          & 1.21 & \textbf{1.03} & 1.22 \\
\texttt{561\_cpu}                   & computing     & 7  & 209  & 0.08 & 0.05          & \textbf{0.04} & 0.06 & 0.10 \\
\texttt{522\_pm10}                  & pollution     & 7  & 500  & 1.56 & 0.94          & 1.17 & \textbf{0.92} & 1.04 \\
\texttt{547\_no2}                   & pollution     & 7  & 500  & 1.34 & \textbf{0.69} & 0.91 & 0.75          & 0.87 \\
\texttt{666\_rmftsa\_ladata}        & epidemiology  & 10 & 508  & 1.08 & 0.69          & 2.05 & \textbf{0.66} & 0.83 \\
\texttt{1028\_SWD}                  & ordinal       & 10 & 1000 & 0.82 & \textbf{0.80} & 0.81 & 0.81          & 0.82 \\
\midrule
Wins (out of 12)                    & --            & -- & --   & 0    & \textbf{5}    & 1    & 4             & 1    \\
Mean rank                           & --            & -- & --   & 4.83 & \textbf{1.67} & 3.08 & 2.25          & 3.17 \\
\bottomrule
\end{tabular}
\end{table}

The pattern matches the benchmark-equation results (Table~\ref{tab:pmlb}): Tanh-KAN is the strongest single model ($5$ wins, mean rank $1.67$); $\ell^p$-KAN takes $4$ datasets with power-law/heavy-tailed structure. Spline G=3 wins zero and has the worst mean rank ($4.83$), often with NRMSE above $1$ where geometry-aware models stay below $0.5$. The geometry-constrained family wins $10/12$; MLP wins one despite several-times-more parameters. Learned $p$ on the Banach branch concentrates near $[2.5, 3.3]$, consistent with super-Euclidean geometry on noisy real-world data.

\section{Discussion and Conclusion}
\label{sec:discussion}

The operative choice in KANs is \emph{function-space geometry}, not the basis or activation. A learned per-edge $p$ matches or improves upon fixed alternatives without tuning. On the clean-data benchmark, Banach-KAN takes the best average rank on the Core 18 ($2.00$) and is statistically tied with the strongest spline on the full 50-equation set ($2.32$ vs.\ $2.34$), while matching or beating fixed-basis baselines on the median NRMSE. Under measurement noise the geometry-constrained variants degrade less than fixed-basis baselines as $\sigma$ rises ($\ell^p$ $3.7\times$ and Tanh $3.2\times$---below even a cross-validated spline ($\approx\!11\times$)---versus $21.6\times$ for the unregularised spline, with Banach at $8.8\times$, from $\sigma=0$ to $\sigma=1$), and Banach-KAN takes the highest number of per-equation wins in the small-sample regime. The three geometry variants play distinct empirical roles: Banach is the lowest-NRMSE model at low noise and small $n$, $\ell^p$ takes over at high noise, and Tanh contributes a bounded-saturation component that $\ell^p$ alone cannot supply. Learned exponents provide an interpretable, init-anchored signal across equation families and input dimensions: edges specialise rather than collapse, the sub-Euclidean fraction grows monotonically with input dimension, and the per-equation exponent \emph{distribution} is preserved under noise, even though individual per-edge values are seed-dependent and the signal is relative to a fixed initialisation. Our robustness claims concern measurement noise, not worst-case perturbations, which lie outside the symbolic-regression setting; the rigorous object for the latter is the per-edge Lipschitz certificate of \Cref{sec:reg}, and empirically the worst noise-degrader (the spline) carries the larger Lipschitz constant under noise---a bounded mechanism rather than a uniform law. We also do not claim reduced spectral bias: our edges are global rather than local, so the Gram conditioning grows with basis size and the spline-locality property does not transfer.

Our approach has two limitations, each suggesting a concrete future-work direction. First, the per-edge form has a capacity ceiling that fixed-basis KANs close as $n$ grows; we plan to mitigate this by composing geometry-constrained edges with a small learnable basis component, retaining the geometric prior while expanding local expressive power, and by exploring deeper geometry-constrained compositions. Second, our evaluation covers symbolic regression and tabular data only; we will scale geometry-constrained KANs to higher-dimensional modalities (sequence and image data) by using them as drop-in replacements for MLP heads inside standard backbones. Beyond these, we plan to characterise KAN and MLP function classes through the variation-norm space~\cite{bach2017breaking} to formalise the differences observed empirically, and to develop a more structured combination of the $\tanh$ and $\ell^p$ branches (for instance tied-affine, gated, or regime-adaptive) so that each branch contributes where its empirical strengths lie, rather than summing both with independent affines.

The contribution is orthogonal to the choice of basis. Casting edge activations as gradients of convex potentials makes existing activations special cases of one construction rather than competing bases, and promoting the exponent to a learnable per-edge parameter makes geometry itself the adaptable degree of freedom---so the learned $p$ is a geometric readout of the target, not a tuned knob. We do not position this against deep architectures on their own ground: an MLP matches these targets only at $20$--$30\times$ the parameters, and the ResNet readout marks the boundary of the regime rather than a contest we entered. The claim is a design principle at an operating point---at comparable per-edge budget, under measurement noise, and at small $n$, it is the induced geometry, not raw expressivity, that governs what is learned well, and at that budget learned geometry beats both generic bases and generic parametric activations. What we offer is a principle, an interpretable mechanism, and a regime where learned geometry is the right tool---not a state-of-the-art benchmark number.

\clearpage
\bibliographystyle{plain}
\bibliography{references}

\begin{thebibliography}{10}

\bibitem{aghaei2024fkan}
Alireza~Afzal Aghaei.
\newblock {fKAN}: Fractional {K}olmogorov--{A}rnold networks with trainable
  {J}acobi basis functions.
\newblock {\em Neurocomputing}, 623:129414, 2025.

\bibitem{bach2017breaking}
Francis Bach.
\newblock Breaking the {C}urse of {D}imensionality with {C}onvex {N}eural
  {N}etworks.
\newblock {\em Journal of Machine Learning Research}, 18(19):1--53, 2017.

\bibitem{bauschke2017convex}
Heinz~H. Bauschke and Patrick~L. Combettes.
\newblock {\em Convex Analysis and Monotone Operator Theory in {H}ilbert
  Spaces}.
\newblock CMS Books in Mathematics. Springer, 2nd edition, 2017.

\bibitem{boulle2020rational}
Nicolas Boull\'e, Yuji Nakatsukasa, and Alex Townsend.
\newblock Rational neural networks.
\newblock In {\em Advances in Neural Information Processing Systems (NeurIPS)},
  2020.

\bibitem{bozorgasl2024wav}
Zavareh Bozorgasl and Hao Chen.
\newblock Wav-{KAN}: Wavelet {K}olmogorov-{A}rnold networks.
\newblock arXiv preprint arXiv:2405.12832, 2024.

\bibitem{cho2014learning}
Kyunghyun Cho, Bart van Merri{\"e}nboer, Caglar Gulcehre, Dzmitry Bahdanau,
  Fethi Bougares, Holger Schwenk, and Yoshua Bengio.
\newblock Learning phrase representations using {RNN} encoder--decoder for
  statistical machine translation.
\newblock In {\em Proceedings of the 2014 Conference on Empirical Methods in
  Natural Language Processing (EMNLP)}, pages 1724--1734, Doha, Qatar, October
  2014. Association for Computational Linguistics.

\bibitem{cioranescu1990geometry}
Ioana Cior{\u{a}}nescu.
\newblock {\em Geometry of {B}anach Spaces, Duality Mappings and Nonlinear
  Problems}, volume~62 of {\em Mathematics and Its Applications}.
\newblock Kluwer Academic Publishers, Dordrecht, 1990.

\bibitem{cybenko1989}
G.~Cybenko.
\newblock Approximation by superpositions of a sigmoidal function.
\newblock {\em Mathematics of Control, Signals and Systems}, 2(4):303--314,
  1989.

\bibitem{hochreiter1997lstm}
S.~Hochreiter and J.~Schmidhuber.
\newblock Long short-term memory.
\newblock {\em Neural Computation}, 9(8):1735--1780, 1997.

\bibitem{hornik1989}
K.~Hornik, M.~Stinchcombe, and H.~White.
\newblock Multilayer feedforward networks are universal approximators.
\newblock {\em Neural Networks}, 2(5):359--366, 1989.

\bibitem{jagtap2020adaptive}
Ameya~D. Jagtap, Kenji Kawaguchi, and George~Em Karniadakis.
\newblock Adaptive activation functions accelerate convergence in deep and
  physics-informed neural networks.
\newblock {\em Journal of Computational Physics}, 404:109136, 2020.

\bibitem{kingma2014adam}
Diederik~P. Kingma and Jimmy Ba.
\newblock Adam: A method for stochastic optimization.
\newblock arXiv preprint arXiv:1412.6980, 2014.

\bibitem{lecun1998efficient}
Yann LeCun, L{\'e}on Bottou, Genevieve~B. Orr, and Klaus-Robert M{\"u}ller.
\newblock Efficient {B}ack{P}rop.
\newblock In {\em Neural Networks: Tricks of the Trade}, Lecture Notes in
  Computer Science, pages 9--50. Springer-Verlag, Berlin, Heidelberg, 1998.

\bibitem{li2024rbfkan}
Ziyao Li.
\newblock {K}olmogorov-{A}rnold networks are radial basis function networks.
\newblock arXiv preprint arXiv:2405.06721, 2024.

\bibitem{liu2024kan}
Ziming Liu, Yixuan Wang, Sachin Vaidya, Fabian Ruehle, James Halverson, Marin
  Soljacic, Thomas~Y. Hou, and Max Tegmark.
\newblock {KAN}: {K}olmogorov--{A}rnold networks.
\newblock In {\em The Thirteenth International Conference on Learning
  Representations (ICLR)}, 2025.

\bibitem{manessi2018learning}
Franco Manessi and Alessandro Rozza.
\newblock Learning combinations of activation functions.
\newblock In {\em International Conference on Pattern Recognition (ICPR)},
  2018.

\bibitem{megginson1998banach}
R.~E. Megginson.
\newblock {\em An Introduction to {B}anach Space Theory}, volume 183 of {\em
  Graduate Texts in Mathematics}.
\newblock Springer, 1998.

\bibitem{molina2020pade}
Alejandro Molina, Patrick Schramowski, and Kristian Kersting.
\newblock Pad\'e activation units: End-to-end learning of flexible activation
  functions in deep networks.
\newblock In {\em International Conference on Learning Representations (ICLR)},
  2020.

\bibitem{romano2022pmlb}
Joseph~D Romano, Trang~T Le, William La~Cava, John~T Gregg, Daniel~J Goldberg,
  Praneel Chakraborty, Natasha~L Ray, Daniel Himmelstein, Weixuan Fu, and
  Jason~H Moore.
\newblock {PMLB} v1.0: an open-source dataset collection for benchmarking
  machine learning methods.
\newblock {\em Bioinformatics}, 38(3):878--880, 2022.

\bibitem{ss2024chebyshev}
Sidharth SS, Keerthana AR, Gokul R, and Anas KP.
\newblock Chebyshev polynomial-based {K}olmogorov-{A}rnold networks: An
  efficient architecture for nonlinear function approximation.
\newblock arXiv preprint arXiv:2405.07200, 2024.

\bibitem{udrescu2020ai}
Silviu-Marian Udrescu and Max Tegmark.
\newblock {AI} {F}eynman: a physics-inspired method for symbolic regression.
\newblock arXiv preprint arXiv:1905.11481, 2020.

\bibitem{xu2024fourierkan}
Jinfeng Xu, Zheyu Chen, Jinze Li, Shuo Yang, Wei Wang, Xiping Hu, and Edith
  Ngai.
\newblock Enhancing graph collaborative filtering with {FourierKAN} feature
  transformation.
\newblock arXiv preprint arXiv:2406.01034, 2025.

\end{thebibliography}

\clearpage
\appendix
\section*{Appendix}
\addcontentsline{toc}{section}{Appendix}
\bigskip

This document collects the exhaustive empirical results and ablations referenced from the main paper.
Per-equation breakdowns and full sweeps for every experiment in the main paper are reported here.
Section labels are reused from the main paper where natural; figures and tables are numbered fresh.

\section{Preregistered Spline--Adam sweep}
\label{app:sweep}

The Spline$^{\dagger}$ configuration in the main paper (Adam, $10{,}000$ steps, $\text{lr} = 10^{-2}$, $\lambda = 0$) was selected \emph{before} running the 18-equation ablation, by a preliminary sweep on two held-out equations (\texttt{I.12.1}, 2D polynomial, and \texttt{I.34.14}, 3D power).
Table~\ref{tab:sweep_full} reports the sweep results.

\begin{table}[H]
\centering
\caption{Preregistered Spline--Adam sweep on two held-out equations (5 seeds each). Step count and learning rate for the main run (\textbf{bold}) selected before observing the 18-equation results.}
\label{tab:sweep_full}
\small
\begin{tabular}{lccc}
\toprule
Configuration & \texttt{I.12.1} NRMSE & \texttt{I.34.14} NRMSE & Time/config \\
\midrule
lr $= 3 \times 10^{-3}$, 5k steps        & 0.0072 & 0.0197 & 55 s \\
lr $= 1 \times 10^{-3}$, 5k steps        & 0.0138 & 0.0317 & 54 s \\
lr $= 3 \times 10^{-4}$, 5k steps        & 0.0333 & 0.0637 & 56 s \\
\textbf{lr $= 1 \times 10^{-2}$, 10k steps} & \textbf{0.0075} & \textbf{0.0098} & \textbf{93 s} \\
\bottomrule
\end{tabular}
\end{table}

At 5{,}000 steps, the best learning rate ($\text{lr} = 3 \times 10^{-3}$) leaves the harder 3D equation a factor of 2 short of converged.
Doubling to 10{,}000 steps at $\text{lr} = 10^{-2}$ closes that gap.
Larger step counts were not explored beyond 10{,}000 because the per-equation cost (93~s/config) already exceeds the L-BFGS baseline's cost (34.7~s/config) by $2.7\times$.

\paragraph{Per-equation clean-data ratio.}
On the 18-equation clean baseline, the Spline$^{\dagger}$ / Spline-LBFGS ratio is highly equation-dependent.
Adam matches or beats L-BFGS on harder equations (\texttt{I.34.10}: $0.28\times$; \texttt{I.8.14}: $0.55\times$; \texttt{III.17.37}: $0.80\times$; \texttt{III.14.14}: $0.83\times$) and loses on simple 2D polynomials (\texttt{I.12.1}: $3.81\times$; \texttt{I.14.4}: $3.49\times$; \texttt{I.25.13}: $3.62\times$).
The $> 3\times$ gap is confined to the 2--3 simplest 2D polynomial equations; on the remaining 15 equations the ratio is at or below $2\times$, with 8 of 18 showing near-parity ($\leqslant 1.2\times$).

\section{Architectures and parameter counts}
\label{app:params}

All KAN models use architecture $[d, K, 1]$ with $K = 5$ for $d \le 2$, $K = 4$ for $d \in \{3,4\}$, and $K = 3$ for $d \ge 5$. Per-edge parameter counts follow \Cref{tab:family}: Tanh-KAN ($4$/edge), $\ell^p$-KAN ($5$/edge), Banach-KAN ($8$/edge), Cheby-KAN (degree $4$, $7$/edge with $\tanh$ normalisation), Spline G=3 ($k = 3$, $G + k + 1 = 7$ control points per edge), and a $50$-unit ReLU MLP. Total parameter counts per model and per input dimension are summarised in Table~\ref{tab:params}. Counted as trainable weights, Banach-KAN matches the pykan spline at $8$ per edge (Cheby-KAN's $7$ is cheaper), so we make no parameter-efficiency claim over the spline; the fairness of the comparison rests instead on the matched per-edge budget. The MLP, by contrast, uses $20$--$30\times$ more parameters than any KAN at $d \ge 4$.

\begin{table}[H]
\centering
\caption{Parameter counts per model by input dimension.}
\label{tab:params}
\small
\begin{tabular}{lcccccc}
\toprule
& Tanh & $\ell^p$ & Banach & Cheby & Spline G=3 & MLP \\
\midrule
2D & 60 & 75  & 120 & 105 & 304 & 201  \\
3D & 64 & 80  & 128 & 112 & 314 & 251  \\
4D & 80 & 100 & 160 & 140 & 380 & 2851 \\
5D & 72 & 90  & 144 & 126 & 348 & 2901 \\
6D & 84 & 105 & 168 & 147 & 400 & 2951 \\
\bottomrule
\end{tabular}
\end{table}

\paragraph{Wins by input dimension.}
Stratifying the 50-equation clean baseline by input dimension shows a clean regime split (Table~\ref{tab:dim_wins}). Spline G=3 dominates at low dimensions ($7/7$ at $2$D, $8/17$ at $3$D); at $d \ge 4$ the wins split among Banach-KAN, Cheby-KAN, and the MLP, with no single model dominating. The MLP becomes competitive at $5$D ($4/11$ wins) at the cost of $20$--$30\times$ more parameters than the KAN models.

\begin{table}[H]
\centering
\caption{Optimal geometry by input dimension ($50$ equations, clean data). Entries are equations won.}
\label{tab:dim_wins}
\small
\begin{tabular}{lcccccc}
\toprule
& Spline G=3 & Tanh & Banach & $\ell^p$ & MLP & Cheby \\
\midrule
2D ($7$ eq.)  & \textbf{7} & 0 & 0          & 0 & 0          & 0 \\
3D ($17$ eq.) & \textbf{8} & 0 & 4          & 0 & 1          & 4 \\
4D ($13$ eq.) & 2          & 0 & \textbf{4} & 0 & 3          & \textbf{4} \\
5D ($11$ eq.) & 3          & 0 & 3          & 0 & \textbf{4} & 1 \\
6D ($2$ eq.)  & 1          & 0 & 0          & 0 & 0          & 1 \\
\bottomrule
\end{tabular}
\end{table}

\paragraph{Compute resources.}
All experiments run on CPU only (no GPU). On a single $22$-core node, the supplementary reproducer's Core 18 single-seed clean baseline ($108$ trainings, $5{,}000$ Adam steps each) completes in about $75$ minutes of wall-clock time and consumes $61$ CPU-minutes in total. Per-training wall-clock by model is summarised in Table~\ref{tab:compute}; spline LBFGS dominates the budget. The full paper's experimental matrix (50-equation clean baseline, noise sweep, small-sample sweep, dissection and fixed-$p$ ablations, five seeds) comprises roughly $7{,}000$ trainings and was run as PBS array jobs on a $64$-core node with $72$-hour walltime, approximately $60$ CPU-hours of effective compute.

\begin{table}[H]
\centering
\caption{Per-training wall-clock (seconds) on a single $22$-core node, Core 18 equations, single seed, $5{,}000$ Adam steps. Statistics over $18$ equations per model.}
\label{tab:compute}
\small
\begin{tabular}{lcccc}
\toprule
Model        & Mean & Median & Min  & Max \\
\midrule
Spline G=3   & $107.1$ & $99.6$ & $56.3$ & $183.0$ \\
Banach       & $28.9$  & $19.6$ & $15.0$ & $83.1$ \\
Cheby        & $23.4$  & $16.8$ & $10.4$ & $61.1$ \\
$\ell^p$     & $19.2$  & $12.0$ & $9.3$  & $65.6$ \\
Tanh         & $15.7$  & $13.6$ & $7.3$  & $38.1$ \\
MLP          & $9.9$   & $8.8$  & $7.7$  & $25.0$ \\
\bottomrule
\end{tabular}
\end{table}

\section{Spline regularisation under noise: CV vs oracle $\lambda$}
\label{app:spline_reg}

We evaluate Spline G=3 with L1-entropy regularisation at $\lambda \in \{0, 10^{-3}, 10^{-2}, 10^{-1}\}$ across noise levels (Table~\ref{tab:spline_reg_noise}).
The optimal $\lambda$ scales with noise: no regularisation for clean data, light regularisation ($\lambda = 10^{-3}$) at $\sigma = 0.1$, and moderate regularisation ($\lambda = 10^{-2}$) from $\sigma = 0.3$ onward.

\begin{table}[H]
\centering
\caption{Median NRMSE across 18 equations for each $\lambda$. Best $\lambda$ per row in \textbf{bold}.}
\label{tab:spline_reg_noise}
\small
\begin{tabular}{lcccc|c}
\toprule
$\sigma$ & $\lambda = 0$ & $\lambda = 10^{-3}$ & $\lambda = 10^{-2}$ & $\lambda = 10^{-1}$ & Best $\lambda$ \\
\midrule
0.0 & \textbf{0.035} & 0.047 & 0.128 & 0.793 & $\lambda = 0$ \\
0.1 & 0.077 & \textbf{0.054} & 0.101 & 0.763 & $10^{-3}$ \\
0.3 & 0.196 & 0.167 & \textbf{0.132} & 0.550 & $10^{-2}$ \\
0.5 & 0.342 & 0.360 & \textbf{0.187} & 0.530 & $10^{-2}$ \\
1.0 & 0.788 & 0.732 & \textbf{0.403} & 0.467 & $10^{-2}$ \\
\bottomrule
\end{tabular}
\end{table}

\paragraph{CV vs oracle.}
CV-selected $\lambda$ (5-fold cross-validation on the training set) tracks the oracle-tuned $\lambda$ (best $\lambda$ per equation $\times$ noise) within 3\% at every noise level.
The gap between regularised spline and geometry-constrained KANs is therefore not an oracle-tuning artefact.

\paragraph{Win counts: best regularised spline vs geometry-constrained.}
Even with the best $\lambda$ per equation (oracle), the regularised spline still loses the per-equation race once $\sigma \geqslant 0.1$ (Table~\ref{tab:spline_reg_wins}).

\begin{table}[H]
\centering
\caption{Per-equation wins against oracle-tuned regularised spline (18 equations).}
\label{tab:spline_reg_wins}
\small
\begin{tabular}{lcccc}
\toprule
$\sigma$ & Best Reg.\ Spline & Banach-KAN & $\ell^p$-KAN & Tanh-KAN \\
\midrule
0.0 & \textbf{11} & 7  & 0  & 0 \\
0.1 & 4           & \textbf{10} & 3  & 1 \\
0.3 & 3           & \textbf{7}  & 6  & 2 \\
0.5 & 3           & 4           & \textbf{10} & 1 \\
1.0 & 0           & 2           & \textbf{13} & 3 \\
\bottomrule
\end{tabular}
\end{table}

\section{Banach $a_2 = 0$ ablation}
\label{app:a2zero}

To verify that Banach-KAN's dominance is not driven entirely by the $\tanh$ component, we freeze $a_2 = 0$ to disable the $J_p$ branch while keeping the $\tanh$ branch trainable (denoted Banach$_{a_2 = 0}$).

\begin{table}[H]
\centering
\caption{$J_p$ branch ablation: Banach-KAN with $a_2$ frozen at 0 (tanh-only) vs.\ full Banach-KAN, by input dimension (18 core equations, clean data, median NRMSE over 5 seeds).}
\label{tab:a2zero_full}
\small
\begin{tabular}{lcccccc}
\toprule
& 2D (3 eq.) & 3D (6 eq.) & 4D (4 eq.) & 5D (4 eq.) & 6D (1 eq.) & Overall \\
\midrule
Full Banach-KAN              & 0.0042 & 0.0220 & 0.0775 & 0.0573 & 0.0776 & 0.0298 \\
Banach$_{a_2 = 0}$ (tanh-only) & 0.0116 & 0.0294 & 0.1411 & 0.1319 & 0.0975 & 0.0556 \\
Ratio (tanh-only / full)     & $2.74\times$ & $1.33\times$ & $1.82\times$ & $2.30\times$ & $1.26\times$ & $1.91\times$ \\
\bottomrule
\end{tabular}
\end{table}

The median degradation is $1.91\times$; full Banach-KAN beats the ablated variant on \emph{all} 18 equations by at least 10\%, with the largest single-equation gap reaching $5.00\times$ on the trigonometric equation \texttt{III.17.37}.
The degradation is present at every dimension, with the largest ratios at 2D ($2.74\times$) and 5D ($2.30\times$): the $J_p$ branch is not redundant at any scale.

\paragraph{Banach with fixed $p = 2$.}
Within the Banach-KAN family, the analogue ablation freezes $p = 2$ (disabling geometry learning while keeping both branches active).
The learned-$p$ Banach beats the fixed-$p = 2$ Banach on 15/18 clean and 14/18 noisy equations; the fixed wins are marginal (Capacitor, Doppler, Barometric).

\section{Per-equation noise tables}
\label{app:noise_per_eq}

We report the median NRMSE per equation per noise level for all 18 core benchmark equations.
Seeds: 1729--1733.
Best per row in \textbf{bold}.

\subsection{$\sigma = 0$ (clean)}

Per-equation median NRMSE on the 18 core equations at $\sigma = 0$, $n = 500$, across $5$ seeds. Best of the four models in \textbf{bold}; Spline G=3 takes $10/18$ wins, Banach-KAN $8/18$, while Tanh-KAN and $\ell^p$-KAN take none on the clean benchmark. The pattern is consistent with the \Cref{tab:baseline50} reading: the spline takes localised wins (sharp where its grid resolution matches), while Banach is uniformly competent and wins the average rank.

\begin{table}[H]
\centering
\caption{Clean-data NRMSE per equation (18 core equations).}
\small
\begin{tabular}{lcccccc}
\toprule
Equation & Type & $d$ & Spline G=3 & Tanh & Banach & $\ell^p$ \\
\midrule
I.12.1 Coulomb        & poly & 2 & \textbf{0.0014} & 0.0093 & 0.0041 & 0.0061 \\
I.14.4 Spring         & poly & 2 & \textbf{0.0014} & 0.0095 & 0.0042 & 0.0073 \\
I.25.13 Capacitor     & rat  & 2 & \textbf{0.0015} & 0.0128 & 0.0082 & 0.0172 \\
I.34.14 Rel.\ Doppler & pow  & 3 & \textbf{0.0091} & 0.0446 & 0.0167 & 0.0321 \\
I.14.3 Grav.\ PE      & poly & 3 & 0.0184 & 0.0372 & \textbf{0.0153} & 0.0296 \\
I.18.12 Torque        & trig & 3 & \textbf{0.0275} & 0.1291 & 0.0322 & 0.1509 \\
I.34.10 Doppler        & rat  & 3 & 0.0356 & 0.0336 & \textbf{0.0274} & 0.0590 \\
I.47.23 Sound         & pow  & 3 & \textbf{0.0086} & 0.0245 & 0.0111 & 0.0205 \\
III.17.37 Ang.\ dist. & trig & 3 & 0.0915 & 0.1830 & \textbf{0.0507} & 0.1134 \\
I.8.14 Euclid.        & comp & 4 & 0.2815 & 0.2281 & \textbf{0.1379} & 0.2458 \\
I.13.4 Kinetic        & poly & 4 & 0.0376 & 0.0384 & \textbf{0.0111} & 0.0259 \\
I.29.16 Cosines       & comp & 4 & \textbf{0.1648} & 0.3303 & 0.2011 & 0.2811 \\
III.4.32 Bose--Einst. & exp  & 4 & 0.0333 & 0.0662 & \textbf{0.0170} & 0.0418 \\
I.12.11 Lorentz       & trig & 5 & 0.0725 & 0.2315 & \textbf{0.0544} & 0.1381 \\
I.44.4 Entropy        & log  & 5 & \textbf{0.0508} & 0.3552 & 0.0602 & 0.0773 \\
II.35.21 Magnet.      & hyp  & 5 & \textbf{0.0691} & 0.1424 & 0.0826 & 0.1221 \\
III.14.14 Diode       & exp  & 5 & 0.0943 & 0.1428 & \textbf{0.0509} & 0.2111 \\
I.40.1 Barometric     & exp  & 6 & \textbf{0.0510} & 0.1120 & 0.0776 & 0.2099 \\
\bottomrule
\end{tabular}
\end{table}

\subsection{$\sigma \in \{0.1, 0.3, 0.5, 1.0\}$}

Aggregate medians by noise level (Table~\ref{tab:noise_grid}); winner counts shift from Banach-led (low $\sigma$) to $\ell^p$-led (high $\sigma$), confirming the geometry transition.

\begin{table}[H]
\centering
\caption{Aggregate noise-by-model median NRMSE (18 equations, 5 seeds). Best per row in \textbf{bold}.}
\label{tab:noise_grid}
\small
\begin{tabular}{lcccccccc}
\toprule
$\sigma$ & G=3 & G=5 & G=10 & G=20 & Tanh & Banach & $\ell^p$ & MLP \\
\midrule
0.0 & 0.037 & 0.064 & 0.104 & 0.122 & 0.089 & \textbf{0.030} & 0.068 & 0.048 \\
0.1 & 0.077 & 0.085 & 0.144 & 0.170 & 0.094 & \textbf{0.041} & 0.077 & 0.073 \\
0.3 & 0.197 & 0.245 & 0.332 & 0.282 & 0.112 & \textbf{0.098} & 0.108 & 0.148 \\
0.5 & 0.346 & 0.374 & 0.414 & 0.408 & 0.185 & 0.145 & \textbf{0.138} & 0.200 \\
1.0 & 0.789 & 0.805 & 0.767 & 0.772 & 0.283 & 0.263 & \textbf{0.249} & 0.381 \\
\bottomrule
\end{tabular}
\end{table}

\paragraph{Win counts per noise level.}
Within the geometry-constrained family the leader transitions from Banach to $\ell^p$ as $\sigma$ grows (Table~\ref{tab:noise_wins}).

\begin{table}[H]
\centering
\caption{Number of equations (out of 18) where each model achieves the lowest median NRMSE.}
\label{tab:noise_wins}
\small
\begin{tabular}{lcccccccc}
\toprule
$\sigma$ & G=3 & G=5 & G=10 & G=20 & Tanh & Banach & $\ell^p$ & MLP \\
\midrule
0.0 & \textbf{6} & 2 & 1 & 0 & 0 & \textbf{6}  & 0           & 3 \\
0.1 & 1          & 0 & 0 & 0 & 1 & \textbf{11} & 2           & 3 \\
0.3 & 0          & 0 & 0 & 0 & 2 & \textbf{8}  & 6           & 2 \\
0.5 & 0          & 0 & 0 & 0 & 1 & 5           & \textbf{10} & 2 \\
1.0 & 0          & 0 & 0 & 0 & 3 & 2           & \textbf{12} & 1 \\
\bottomrule
\end{tabular}
\end{table}

\paragraph{Degradation analysis.}
The ratio of median NRMSE at $\sigma = 1.0$ to $\sigma = 0$ separates fixed-basis from learnable-geometry models cleanly (Table~\ref{tab:degradation}).

\begin{table}[H]
\centering
\caption{Degradation factor (NRMSE at $\sigma = 1.0$ / NRMSE at $\sigma = 0$).}
\label{tab:degradation}
\small
\begin{tabular}{lccc}
\toprule
Model & NRMSE at $\sigma = 0$ & NRMSE at $\sigma = 1.0$ & Degradation \\
\midrule
Spline G=3   & 0.037 & 0.789 & $\mathbf{21.6\times}$ \\
Spline G=5   & 0.064 & 0.805 & $12.6\times$ \\
Spline G=10  & 0.104 & 0.767 & $7.4\times$  \\
Spline G=20  & 0.122 & 0.772 & $6.3\times$  \\
Spline CV ($G{=}3$) & 0.036 & 0.403 & $11.2\times$ \\
MLP          & 0.048 & 0.381 & $8.0\times$  \\
Banach-KAN   & 0.030 & 0.263 & $8.8\times$  \\
$\ell^p$-KAN & 0.068 & 0.249 & $\mathbf{3.7\times}$ \\
Tanh-KAN     & 0.089 & 0.283 & $\mathbf{3.2\times}$ \\
\bottomrule
\end{tabular}
\end{table}

$\ell^p$-KAN and Tanh-KAN degrade least ($3$--$4\times$); Spline G=3 degrades most ($21.6\times$).
All spline variants converge to NRMSE around $0.75$--$0.80$ at $\sigma = 1.0$, rendering them essentially useless.

\section{Per-equation small-sample tables}
\label{app:scarcity_per_eq}

\begin{table}[H]
\centering
\caption{Aggregate small-sample median NRMSE ($\sigma = 0$, 18 core equations, 5 seeds). Best per row in \textbf{bold}.}
\label{tab:scarcity_grid}
\small
\begin{tabular}{lcccccccc}
\toprule
$n$  & G=3            & G=5   & G=10  & G=20  & Tanh  & Banach         & $\ell^p$ & MLP   \\
\midrule
50   & 0.246          & 0.316 & 0.425 & 0.670 & 0.207 & \textbf{0.102} & 0.190    & 0.179 \\
100  & 0.087          & 0.203 & 0.348 & 0.346 & 0.150 & \textbf{0.075} & 0.100    & 0.096 \\
200  & 0.045          & 0.098 & 0.186 & 0.253 & 0.121 & \textbf{0.037} & 0.069    & 0.069 \\
500  & 0.037          & 0.058 & 0.106 & 0.123 & 0.089 & \textbf{0.030} & 0.068    & 0.048 \\
1000 & 0.029          & 0.046 & 0.074 & 0.088 & 0.058 & \textbf{0.020} & 0.059    & 0.036 \\
\bottomrule
\end{tabular}
\end{table}

\begin{table}[H]
\centering
\caption{Number of equations (out of 18) where each model wins per sample size at $\sigma = 0$ (across $8$ models including spline grids G=5,10,20).}
\label{tab:scarcity_wins}
\small
\begin{tabular}{lcccccccc}
\toprule
$n$  & G=3        & G=5 & G=10 & G=20 & Tanh & Banach     & $\ell^p$ & MLP \\
\midrule
50   & 3          & 0   & 0    & 0    & 0    & \textbf{8} & 3        & 4   \\
100  & 5          & 0   & 0    & 0    & 0    & \textbf{7} & 2        & 4   \\
200  & 5          & 2   & 0    & 0    & 1    & \textbf{7} & 0        & 3   \\
500  & 5          & 2   & 1    & 0    & 0    & \textbf{7} & 0        & 3   \\
1000 & \textbf{7} & 0   & 1    & 0    & 0    & 6          & 0        & 4   \\
\bottomrule
\end{tabular}
\end{table}

Banach-KAN takes the most wins at every $n \le 500$ and remains close to the front at $n = 1000$. Spline G=3 increases its win count monotonically with $n$ ($3 \to 7$), consistent with basis-based KANs needing more samples to identify their per-edge parameters; the larger spline grids (G=5, G=10, G=20) win rarely at any $n$ in this range. Geometry-constrained models (Banach + $\ell^p$) and the small spline (G=3) together account for the bulk of wins at every sample size.

\paragraph{Per-equation best model.}
Table~\ref{tab:scarcity_best} reports the winning model for each equation at three representative sample sizes (under $\sigma = 0$).

\begin{table}[H]
\centering
\caption{Per-equation best model in the clean small-sample regime ($\sigma = 0$).}
\label{tab:scarcity_best}
\small
\begin{tabular}{clccccc}
\toprule
\# & Equation & Type & $d$ & $n = 50$ & $n = 500$ & $n = 1000$ \\
\midrule
1  & I.12.1 Coulomb        & poly & 2 & $\ell^p$ & G=5    & G=3      \\
2  & I.14.4 Spring         & poly & 2 & G=3      & G=3    & G=3      \\
3  & I.25.13 Capacitor     & rat  & 2 & Banach   & G=5    & G=3      \\
4  & I.34.14 Rel.\ Doppler & pow  & 3 & Banach   & G=3    & G=3      \\
5  & I.14.3 Grav.\ PE      & poly & 3 & Banach   & Banach & Banach   \\
6  & I.18.12 Torque        & trig & 3 & Banach   & G=3    & Banach   \\
7  & I.34.10 Doppler       & rat  & 3 & G=3      & G=10   & G=10     \\
8  & I.47.23 Sound         & pow  & 3 & Banach   & G=3    & Banach   \\
9  & III.17.37 Ang.\ dist. & trig & 3 & Banach   & Banach & Banach   \\
10 & I.8.14 Euclid.        & comp & 4 & MLP      & MLP    & MLP      \\
11 & I.13.4 Kinetic        & poly & 4 & $\ell^p$ & Banach & Banach   \\
12 & I.29.16 Cosines       & comp & 4 & MLP      & MLP    & MLP      \\
13 & III.4.32 Bose--Einst. & exp  & 4 & Banach   & Banach & Banach   \\
14 & I.12.11 Lorentz       & trig & 5 & $\ell^p$ & Banach & MLP      \\
15 & I.44.4 Entropy        & log  & 5 & Banach   & G=3    & G=3      \\
16 & II.35.21 Magnet.      & hyp  & 5 & MLP      & MLP    & MLP      \\
17 & III.14.14 Diode       & exp  & 5 & G=3      & Banach & G=3      \\
18 & I.40.1 Barometric     & exp  & 6 & MLP      & Banach & G=3      \\
\bottomrule
\end{tabular}
\end{table}

\section{Architecture dissection per equation}
\label{app:dissection_per_eq}

The 3-way race between Tanh, $\ell^p$, and Banach across regimes (Table~\ref{tab:dissection_per_eq}) shows that the conjunction story holds equation by equation, not just on aggregate.

\begin{table}[H]
\centering
\caption{Dissection winner per equation by regime. \textbf{B} = Banach, \textbf{L} = $\ell^p$, \textbf{T} = Tanh; columns are clean ($\sigma = 0$, $n = 500$) and noisy ($\sigma = 0.3$, $n = 500$).}
\label{tab:dissection_per_eq}
\small
\begin{tabular}{clccccc}
\toprule
\#  & Equation & Type & $d$ & Clean & Noisy \\
\midrule
1   & I.12.1   & poly & 2 & B & L \\
2   & I.14.4   & poly & 2 & B & B \\
3   & I.25.13  & rat  & 2 & B & T \\
4   & I.34.14  & pow  & 3 & B & L \\
5   & I.14.3   & poly & 3 & B & B \\
6   & I.18.12  & trig & 3 & B & B \\
7   & I.34.10  & rat  & 3 & B & T \\
8   & I.47.23  & pow  & 3 & B & L \\
9   & III.17.37& trig & 3 & B & B \\
10  & I.8.14   & comp & 4 & B & B \\
11  & I.13.4   & poly & 4 & B & L \\
12  & I.29.16  & comp & 4 & B & B \\
13  & III.4.32 & exp  & 4 & B & L \\
14  & I.12.11  & trig & 5 & B & B \\
15  & I.44.4   & log  & 5 & B & L \\
16  & II.35.21 & hyp  & 5 & B & B \\
17  & III.14.14& exp  & 5 & B & B \\
18  & I.40.1   & exp  & 6 & B & B \\
\bottomrule
\end{tabular}
\end{table}

In the clean regime, Banach wins $18/18$; under noise ($\sigma = 0.3$) Banach wins $10$, $\ell^p$ wins $6$, and Tanh wins $2$.
The pattern matches the design intuition: power-law geometry is most useful when label information is weak, while the bounded $\tanh$ branch is most useful when fine compositional structure must be tracked.

\section{Learned geometry: full distributional analysis}
\label{app:learned_geometry}

\paragraph{Sub-Euclidean fraction by dimension.}
Across $4{,}465$ Banach-KAN edges (50 equations $\times$ 5 seeds), the fraction of edges with $p < 2$ scales monotonically with input dimension (Figure~\ref{fig:pdist_supp} and Table~\ref{tab:pdim_supp}).

\begin{figure}[h]
\centering
\includegraphics[width=0.85\textwidth]{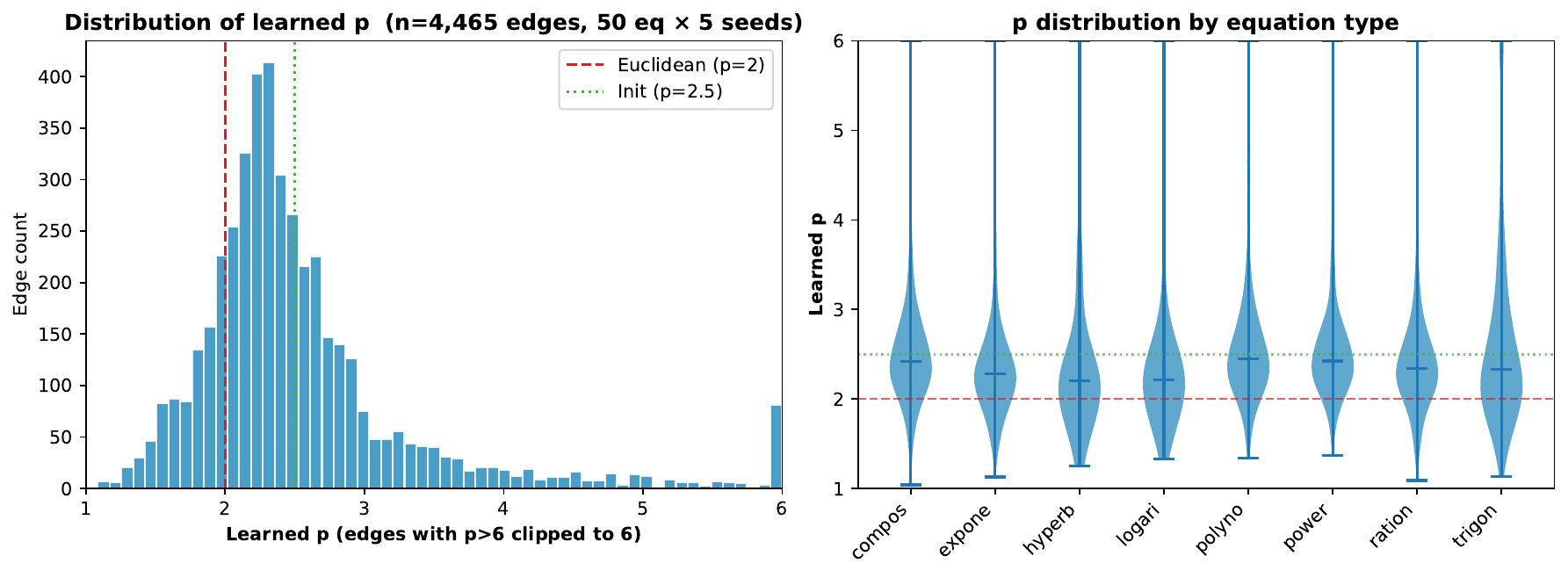}
\caption{Distribution of learned exponents $p$ across $4{,}465$ Banach-KAN edges, broken out by input dimension.}
\label{fig:pdist_supp}
\end{figure}

\begin{table}[H]
\centering
\caption{Fraction of edges with $p < 2$ and edge counts by dimension.}
\label{tab:pdim_supp}
\small
\begin{tabular}{lccccc}
\toprule
Dimension & 2D & 3D & 4D & 5D & 6D \\
\midrule
Edge count                 & 525 & 1{,}275 & 1{,}040 & 1{,}320 & 305 \\
Fraction with $p < 2$ (\%) & 11  & 15      & 16      & 29      & 34  \\
Fraction with $p < 1.5$ (\%) & 1.2 & 1.4   & 1.8     & 4.0     & 5.4 \\
\bottomrule
\end{tabular}
\end{table}

\paragraph{Within-run specialisation.}
In 77\% of training runs the within-run standard deviation of learned $p$ across edges exceeds 0.5, confirming that distinct edges converge to distinct geometries rather than collapsing to a shared value.
By equation type, hyperbolic (31\%), logarithmic (28\%), and trigonometric (26\%) equations push edges toward $p < 2$, while polynomial (11\%) and power (9\%) equations keep most edges above 2.

\paragraph{Mean learned $p$ by equation type.}
Aggregate means obscure the within-type spread documented above (Table~\ref{tab:pvalues_supp}).

\begin{table}[H]
\centering
\caption{Learned exponent $p$ aggregated by equation type (Banach-KAN, 50 equations, clean data).}
\label{tab:pvalues_supp}
\small
\begin{tabular}{lccc}
\toprule
Equation type & Mean $p$ & Std & Range \\
\midrule
Trigonometric & 2.73 & 1.49 & 1.14--22.2 \\
Polynomial    & 2.64 & 0.86 & 1.34--9.0  \\
Power         & 2.61 & 0.80 & 1.37--8.2  \\
Compositional & 2.63 & 0.92 & 1.04--11.1 \\
Rational      & 2.50 & 0.85 & 1.09--10.3 \\
Exponential   & 2.49 & 2.01 & 1.13--53.2 \\
Logarithmic   & 2.46 & 1.00 & 1.33--8.4  \\
Hyperbolic    & 2.60 & 1.82 & 1.26--15.8 \\
\bottomrule
\end{tabular}
\end{table}

\paragraph{Stability under noise.}
Mean learned $p$ across 18 equations is stable as $\sigma$ grows (Table~\ref{tab:p_under_noise}); the standard deviation widens only slightly.

\begin{table}[H]
\centering
\caption{Mean learned $p$ across 18 equations at each noise level (with std across equations).}
\label{tab:p_under_noise}
\small
\begin{tabular}{lcccc}
\toprule
$\sigma$ & $\ell^p$ mean $p$ & $\ell^p$ std & Banach mean $p$ & Banach std \\
\midrule
0.0 & 2.50 & 0.15 & 2.56 & 0.15 \\
0.1 & 2.49 & 0.16 & 2.59 & 0.14 \\
0.3 & 2.52 & 0.18 & 2.63 & 0.14 \\
0.5 & 2.51 & 0.18 & 2.64 & 0.16 \\
1.0 & 2.61 & 0.19 & 2.74 & 0.19 \\
\bottomrule
\end{tabular}
\end{table}

\paragraph{Fixed vs learned $p$ (full per-equation, clean).}
For $\ell^p$-KAN with frozen $p$ vs learned $p$ on $18$ equations under clean data, learned $p$ wins $17/18$ outright. The single fixed-$p$ win lands on \texttt{I.29.16 Cosines}, where $p = 3.0$ achieves $0.269$ vs.\ learned at $0.283$ (within $5\%$).

\begin{table}[H]
\centering
\caption{$\ell^p$-KAN fixed-$p$ vs learned-$p$ NRMSE per equation (clean, $n = 500$). Best per row in \textbf{bold}.}
\label{tab:fixedp_per_eq}
\scriptsize
\begin{tabular}{clcccccccl}
\toprule
\# & Equation & Type & $d$ & $p\!=\!1.5$ & $p\!=\!2.0$ & $p\!=\!2.5$ & $p\!=\!3.0$ & $p\!=\!4.0$ & Learned \\
\midrule
1  & I.12.1   & poly & 2 & 0.081 & 0.266 & 0.025 & 0.048 & 0.046 & \textbf{0.018} \\
2  & I.14.4   & poly & 2 & 0.080 & 0.351 & 0.031 & 0.026 & 0.044 & \textbf{0.022} \\
3  & I.25.13  & rat  & 2 & 0.083 & 0.405 & 0.097 & 0.070 & 0.076 & \textbf{0.042} \\
4  & I.34.14  & pow  & 3 & 0.102 & 0.213 & 0.097 & 0.088 & 0.109 & \textbf{0.055} \\
5  & I.14.3   & poly & 3 & 0.120 & 0.376 & 0.116 & 0.099 & 0.109 & \textbf{0.061} \\
6  & I.18.12  & trig & 3 & 0.233 & 0.714 & 0.182 & 0.187 & 0.171 & \textbf{0.157} \\
7  & I.34.10   & rat  & 3 & 0.109 & 0.284 & 0.132 & 0.127 & 0.100 & \textbf{0.069} \\
8  & I.47.23  & pow  & 3 & 0.092 & 0.289 & 0.108 & 0.102 & 0.110 & \textbf{0.039} \\
9  & III.17.37& trig & 3 & 0.386 & 0.929 & 0.237 & 0.205 & 0.207 & \textbf{0.134} \\
10 & I.8.14   & comp & 4 & 0.649 & 1.009 & 0.303 & 0.323 & 0.346 & \textbf{0.240} \\
11 & I.13.4   & poly & 4 & 0.138 & 0.295 & 0.054 & 0.062 & 0.091 & \textbf{0.047} \\
12 & I.29.16  & comp & 4 & 0.416 & 0.904 & 0.275 & \textbf{0.269} & 0.406 & 0.283 \\
13 & III.4.32 & exp  & 4 & 0.171 & 0.566 & 0.284 & 0.229 & 0.202 & \textbf{0.087} \\
14 & I.12.11  & trig & 5 & 0.292 & 0.606 & 0.250 & 0.275 & 0.235 & \textbf{0.159} \\
15 & I.44.4   & log  & 5 & 0.245 & 0.611 & 0.330 & 0.270 & 0.219 & \textbf{0.104} \\
16 & II.35.21 & hyp  & 5 & 0.185 & 0.370 & 0.196 & 0.220 & 0.226 & \textbf{0.150} \\
17 & III.14.14& exp  & 5 & 0.306 & 0.615 & 0.367 & 0.330 & 0.388 & \textbf{0.190} \\
18 & I.40.1   & exp  & 6 & 0.272 & 0.561 & 0.267 & 0.253 & 0.267 & \textbf{0.212} \\
\bottomrule
\end{tabular}
\end{table}

The fixed $p = 2$ column (Euclidean) is consistently the worst choice, confirming that the departure from Hilbert geometry is beneficial.

\section{Pendulum interpretability case study}
\label{app:pendulum}

As an illustrative interpretability experiment, we consider the nonlinear correction to the small-angle pendulum period.
With $T = 4 \sqrt{L/g}\, K(\sin^2(\theta_0/2))$ and $T_0 = 2\pi\sqrt{L/g}$, the correction factor $R(\theta_0) = T/T_0 = (2/\pi) K(\sin^2(\theta_0/2))$ is smooth, monotone, and has a logarithmic singularity as $\theta_0 \to \pi$.

\paragraph{Visual comparison: spline vs.\ $J_p$ on the pendulum target.}
Figure~\ref{fig:pendulum_failure} contrasts how a small B-spline stack and a small $J_p$-stack approximate $R(\theta_0)$ when trained on $[0.01, 2.0]$ at $n = 500$ and asked to extrapolate on $(2.0, 2.8]$ toward the singularity at $\theta_0 = \pi$. The spline (a local basis tied to its grid) collapses past the training boundary because the basis carries no information about behaviour outside the grid; the $J_p$-stack (a global functional form parameterised by $p$) extrapolates with the correct power-law shape because the activation itself encodes the singular geometry. The same contrast appears at a cusp $f(x) = \operatorname{sign}(x)|x|^{0.7}$, where a single $J_p$ edge with learned $p \approx 1.7$ reproduces the cusp exactly while the spline rounds it. The figure uses parameter-matched pedagogical implementations (a few-edge spline stack and a few-edge $J_p$ stack) rather than the full pykan / Banach-KAN architectures of Tables~\ref{tab:pendulum_interp}--\ref{tab:pendulum_extrap}, so the absolute extrapolation NRMSE values shown in the legend are larger than those reported in the tables; the qualitative gap between local and global parameterisations is the point.

\begin{figure}[h]
\centering
\includegraphics[width=\textwidth]{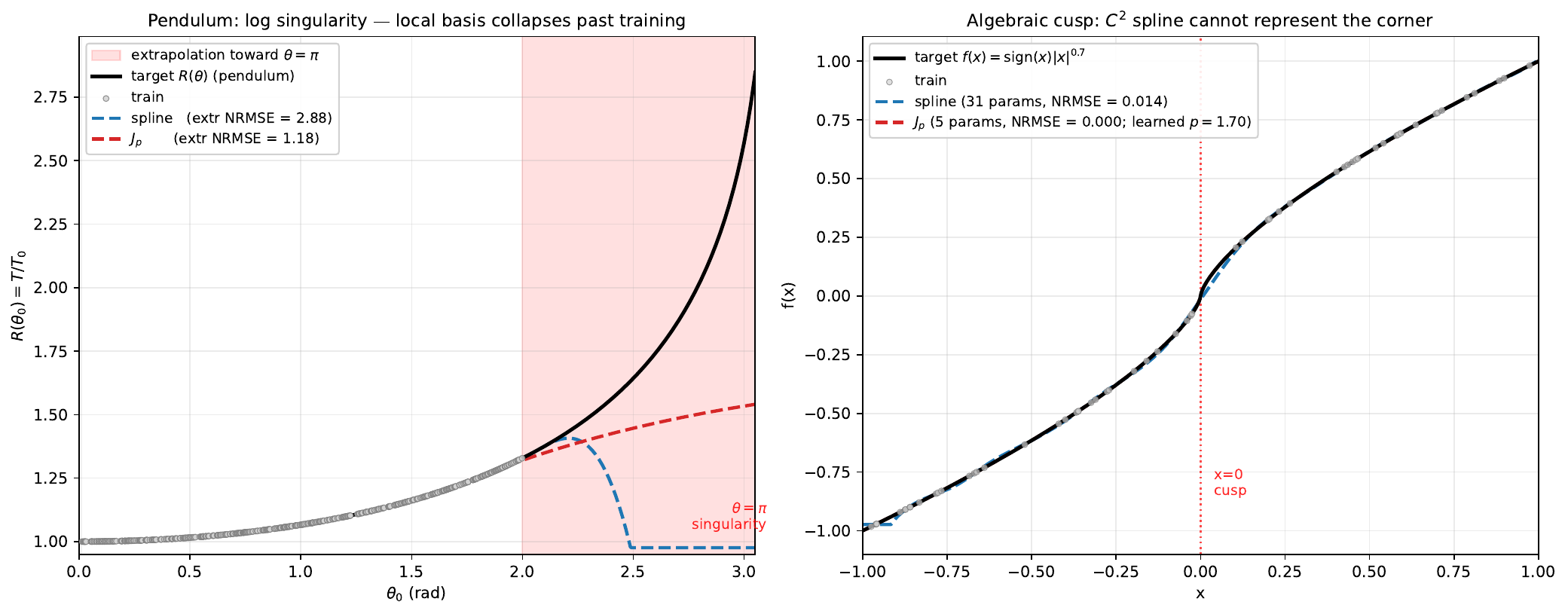}
\caption{Two regularity-mismatch failure modes for fixed bases (parameter-matched pedagogical models; train $[0.01, 2.0]$ at $n = 500$, extrapolate on $(2.0, 2.8]$). \textbf{Left}: pendulum correction $R(\theta_0)$ with a log singularity at $\theta_0 = \pi$. The spline (blue) extrapolates flat past the training interval; the $J_p$-stack (red) extrapolates with the correct power-law shape. Extrapolation NRMSE is reported in the legend; absolute values exceed those in Table~\ref{tab:pendulum_extrap} because of the smaller per-edge parameter budget used here for visual clarity, but the relative ordering matches the full experiments. \textbf{Right}: algebraic cusp $f(x) = \operatorname{sign}(x)|x|^{0.7}$. A single $J_p$ edge with learned $p \approx 1.7$ recovers the cusp; a same-budget spline cannot.}
\label{fig:pendulum_failure}
\end{figure}

\paragraph{Setup.}
Two formulations: 1D ($\theta_0 \to R$) and 3D ($(L, g, \theta_0) \to T$).
Three regimes: interpolation on $[0.01, 2.8]$, extrapolation (train on $[0.01, 2.0]$, test on $[2.0, 2.8]$), and noise on the interpolation regime.

\paragraph{1D interpolation.}
$\ell^p$-KAN dominates at every sample size (Table~\ref{tab:pendulum_interp}), achieving NRMSE $0.0012$ at $n = 500$---a relative error below 0.2\%.

\begin{table}[H]
\centering
\caption{1D pendulum interpolation NRMSE (median over 5 seeds).}
\label{tab:pendulum_interp}
\small
\begin{tabular}{lccccc}
\toprule
$n$  & Spline G=3 & Tanh & Banach & $\ell^p$ & MLP \\
\midrule
50   & 0.0066 & 0.0053 & 0.0032 & \textbf{0.0025} & 0.0136 \\
100  & 0.0031 & 0.0043 & 0.0029 & \textbf{0.0018} & 0.0130 \\
200  & 0.0018 & 0.0038 & 0.0031 & \textbf{0.0013} & 0.0090 \\
500  & 0.0015 & 0.0044 & 0.0024 & \textbf{0.0012} & 0.0068 \\
1000 & 0.0015 & 0.0043 & 0.0023 & \textbf{0.0014} & 0.0063 \\
\bottomrule
\end{tabular}
\end{table}

\paragraph{1D extrapolation.}
Banach-KAN dominates the out-of-distribution regime, despite training only on $[0.01, 2.0]$.

\begin{table}[H]
\centering
\caption{1D pendulum extrapolation NRMSE (train on $[0.01, 2.0]$, test on $[2.0, 2.8]$).}
\label{tab:pendulum_extrap}
\small
\begin{tabular}{lccccc}
\toprule
$n$  & Spline G=3 & Tanh  & Banach          & $\ell^p$ & MLP   \\
\midrule
200  & 0.675      & 1.139 & \textbf{0.364}  & 0.394 & 0.907 \\
500  & 0.708      & 1.132 & \textbf{0.356}  & 0.390 & 0.868 \\
1000 & 0.690      & 1.130 & \textbf{0.348}  & 0.394 & 0.881 \\
\bottomrule
\end{tabular}
\end{table}

The Tanh and MLP baselines exceed NRMSE 1 (worse than mean prediction) on extrapolation.
Spline reaches $\sim 0.7$.
Banach's dual-geometry composition transfers best to the unseen near-singularity region.

\paragraph{Learned exponents.}
For Banach-KAN on 1D pendulum, the learned $p$ across 5 seeds is $2.9 \pm 0.1$ (mean), with individual edges ranging from 2.3 to 4.9.
This is consistent with the regularity structure of $K(m)$ near the upper training boundary.

\paragraph{Connection to $K(m)$ singularity.}
$K(m) \sim \tfrac{1}{2}\log(4/(1 - m))$ as $m \to 1$, so with $m = \sin^2(\theta_0/2)$ and $\varepsilon = \pi - \theta_0$,
\[
1 - m = \cos^2(\theta_0/2) \sim \varepsilon^2/4, \qquad K(m) \sim \log(4/\varepsilon), \qquad R(\theta_0) \sim (2/\pi) \log(4/\varepsilon).
\]
Derivatives near $\theta_0 = \pi$:
$R'(\theta_0) \sim (1/\pi)\log(4/\varepsilon)/\varepsilon$ and $R''(\theta_0) \sim (1/\pi)(1 + \log(4/\varepsilon))/\varepsilon^2$.
At the upper training boundary $\theta_0 = 2.8$, $\varepsilon = 0.34$, $R'' \sim 9.5$---bounded but large.
Although $R$ is $C^\infty$ on $[0.01, 2.8]$, derivatives scale as $\varepsilon^{-k}\log(1/\varepsilon)$, controlled by the distance to the singularity.
The local Hölder-like exponent of $R$ near the boundary is encoded in this distance-to-singularity structure.
The activation $J_p(z) = \operatorname{sign}(z)|z|^{p-1}$ has nonlinearity order $p - 1$.
For $R$'s local behaviour ($\log(1/\varepsilon)/\varepsilon$, a composite of log and inverse-power), the optimal $p$ would satisfy $p - 1 \sim \alpha$ for some effective regularity exponent $\alpha$ that absorbs the log factor; the observed $p \approx 2.9$ is consistent with this picture.
We frame the precise approximation-theoretic derivation as an open problem.

\section{PMLB tabular benchmark: additional notes}
\label{app:pmlb}

The per-dataset table is reported in \Cref{tab:pmlb}. Two additional observations not in the main paper:

\begin{itemize}[itemsep=1pt,topsep=2pt,leftmargin=1.5em]
\item Learned $p$ clusters around $2.5$--$3.0$ across PMLB datasets, with the environmental dataset pushing higher ($p \approx 3.9$), consistent with smoother target functions favouring higher-order polynomial growth.
\item The dominance of Tanh-KAN on tabular data suggests Orlicz geometry is sufficient when compositional structure is weak; $\ell^p$ and hybrid geometries provide their largest advantages in structured scientific settings.
\end{itemize}

\section{Memorisation and edge-expressivity toy experiments}
\label{app:toy}

To complement the symbolic regression results, four toy memorisation tasks evaluate noise robustness on synthetic targets: $\exp(\sin x_1 + x_2^2)$, $J_0(20 x)$ (Bessel), $x_1 x_2$ (multiplication), and a 4D compositional target.
Across these tasks Tanh-KAN, Banach-KAN, and Spline-KAN show the same pattern as on Feynman:
\begin{itemize}[itemsep=1pt,topsep=2pt,leftmargin=1.5em]
\item No universal winner across all four; the choice depends on smoothness and dimensionality.
\item Banach-KAN is the best all-rounder at moderate-to-large data sizes.
\item Tanh-KAN outperforms Spline on noisy 2D and 3D targets, consistent with the implicit-regularisation reading.
\end{itemize}

The edge-expressivity ablation fits a single learnable activation to four univariate targets (sigmoid, cubic, scaled Bessel $J_0$, and an exponential).
Banach-KAN reproduces all four (NRMSE $< 0.04$ on each), while $\ell^p$-KAN succeeds on the two power-law-shaped targets and underperforms on the bounded sigmoid---providing a geometric explanation for the conjunction story: the $\tanh$ branch is needed precisely when the target saturates.

\section{Full benchmark (50 equations)}
\label{app:equations}

The first $18$ equations form the core subset used for noise, small-sample, dissection, and fixed-$p$ ablations; all $50$ are used for the clean baseline. Of the $50$ equations, $40$ are taken from the AI Feynman benchmark~\cite{udrescu2020ai} with the original equation IDs preserved (so that the formula at any \texttt{I.\#.\#} / \texttt{II.\#.\#} / \texttt{III.\#.\#} entry matches the corresponding Feynman entry verbatim); the remaining $10$ entries (prefixed \texttt{F.}) are synthetic stress-test targets we constructed to broaden coverage of the polynomial, trigonometric, exponential, logarithmic, hyperbolic and compositional types in $3$--$6$D.

{\scriptsize
\setlength{\tabcolsep}{4pt}
\begin{longtable}{clllcl@{}c}
\caption{All 50 benchmark equations. Entries with IDs \texttt{I.\#.\#}, \texttt{II.\#.\#}, \texttt{III.\#.\#} are taken from~\cite{udrescu2020ai}; entries prefixed \texttt{F.} are synthetic stress-test targets we constructed for this work. ``Core'' denotes membership in the 18-equation subset.}\\
\toprule
\# & ID & Name & Type & $d$ & Formula & Core \\
\midrule
\endfirsthead
\toprule
\# & ID & Name & Type & $d$ & Formula & Core \\
\midrule
\endhead
\bottomrule
\endfoot
1  & I.12.1    & Coulomb (linear)      & poly & 2 & $\mu \, N$                                              & $\checkmark$ \\
2  & I.14.4    & Spring energy         & poly & 2 & $\tfrac{1}{2} k x^2$                                    & $\checkmark$ \\
3  & I.25.13   & Capacitor voltage     & rat  & 2 & $q / C$                                                 & $\checkmark$ \\
4  & I.34.14   & Relativistic Doppler  & pow  & 3 & $\omega \sqrt{(1 + v/c)/(1 - v/c)}$                     & $\checkmark$ \\
5  & I.14.3    & Gravitational PE      & poly & 3 & $m g z$                                                 & $\checkmark$ \\
6  & I.18.12   & Torque                & trig & 3 & $r F \sin\theta$                                        & $\checkmark$ \\
7  & I.34.10    & Doppler effect        & rat  & 3 & $\omega_0 / (1 - v/c)$                                  & $\checkmark$ \\
8  & I.47.23   & Speed of sound        & pow  & 3 & $\sqrt{\gamma p / \rho}$                                & $\checkmark$ \\
9  & III.17.37 & Angular distribution  & trig & 3 & $\beta(1 + \alpha \cos\theta)$                          & $\checkmark$ \\
10 & I.8.14    & Euclidean distance    & comp & 4 & $\sqrt{(x_2-x_1)^2 + (y_2-y_1)^2}$                      & $\checkmark$ \\
11 & I.13.4    & Kinetic energy        & poly & 4 & $\tfrac{1}{2} m (v^2 + u^2 + w^2)$                      & $\checkmark$ \\
12 & I.29.16   & Law of cosines        & comp & 4 & $\sqrt{x_1^2 + x_2^2 - 2 x_1 x_2 \cos(t_1 - t_2)}$      & $\checkmark$ \\
13 & III.4.32  & Bose--Einstein        & exp  & 4 & $1/(\exp(h\omega/2\pi k_B T) - 1)$                      & $\checkmark$ \\
14 & I.12.11   & Lorentz force         & trig & 5 & $q (E_f + B v \sin\theta)$                              & $\checkmark$ \\
15 & I.44.4    & Entropy change        & log  & 5 & $n k_B T \ln(V_2 / V_1)$                                & $\checkmark$ \\
16 & II.35.21  & Magnetisation         & hyp  & 5 & $n_\rho \mu \tanh(\mu B / k_B T)$                       & $\checkmark$ \\
17 & III.14.14 & Diode equation        & exp  & 5 & $I_0 (\exp(q V / k_B T) - 1)$                           & $\checkmark$ \\
18 & I.40.1    & Barometric formula    & exp  & 6 & $n_0 \exp(-m g x / k_B T)$                              & $\checkmark$ \\
\midrule
19 & II.6.15a  & Dipole field          & trig & 3 & $p \cos\theta / r^2$                                    & \\
20 & I.12.4    & Electric field        & rat  & 2 & $q / r^2$                                               & \\
21 & I.10.7    & Relativistic mass     & pow  & 2 & $m_0 / \sqrt{1 - \beta^2}$                              & \\
22 & I.6.20a     & Normal distribution   & exp  & 2 & $e^{-\theta^2 / 2\sigma^2}/(\sqrt{2\pi}\sigma)$         & \\
23 & F.43   & Velocity correction   & poly & 3 & $v + u + \alpha v u$                                    & \\
24 & I.16.6    & Velocity addition     & rat  & 3 & $(u + v)/(1 + u v / c^2)$                               & \\
25 & I.27.6    & Thin lens             & rat  & 3 & $1/(1/d_1 + n/d_2)$                                     & \\
26 & II.2.42   & Heat conduction       & rat  & 3 & $\kappa\, \Delta T / d$                                 & \\
27 & II.15.4   & Magnetic PE           & trig & 3 & $\mu B \cos\theta$                                      & \\
28 & III.15.12 & Tight binding         & trig & 3 & $2 U(1 - \cos(k d))$                                    & \\
29 & I.18.16   & Angular momentum      & trig & 4 & $m r v \sin\theta$                                      & \\
30 & II.11.17  & Density fluctuation   & trig & 4 & $n_0(1 + p_d \cos\theta / k_B T)$                       & \\
31 & I.37.4    & Interference          & comp & 3 & $I_1 + I_2 + 2\sqrt{I_1 I_2}\cos\delta$                 & \\
32 & I.26.2    & Snell's law           & trig & 2 & $\arcsin(n \sin\theta)$                                 & \\
33 & F.33      & Mixed trig product    & trig & 5 & $a b c \sin\theta \cos\phi$                             & \\
34 & F.34      & Phase-shift product   & trig & 5 & $a b \sin(\phi_1 + \phi_2) c$                           & \\
35 & III.10.19 & Magnetic moment       & comp & 4 & $\mu \sqrt{B_x^2 + B_y^2 + B_z^2}$                      & \\
36 & I.15.3t   & Lorentz time          & comp & 4 & $(t - u x / c^2)/\sqrt{1 - u^2/c^2}$                    & \\
37 & I.24.6    & Oscillator energy     & poly & 4 & $\tfrac{1}{4} m(\omega^2 + \omega_0^2) x^2$             & \\
38 & I.18.4    & Centre of mass        & rat  & 4 & $(m_1 r_1 + m_2 r_2)/(m_1 + m_2)$                       & \\
39 & I.13.12   & Grav.\ PE difference  & rat  & 5 & $G m_1 m_2 (1/r_2 - 1/r_1)$                             & \\
40 & F.40      & Inverse-sqrt field    & comp & 4 & $q k / \sqrt{r_1^2 + r_2^2}$                            & \\
41 & I.39.11   & Polytropic energy     & rat  & 3 & $p V/(\gamma - 1)$                                      & \\
42 & I.48.20    & Relativistic energy   & pow  & 3 & $m c^2/\sqrt{1 - v^2/c^2}$                              & \\
43 & II.11.27  & Boltzmann factor      & exp  & 4 & $n_0 \exp(-\mu E_f/k_B T)$                              & \\
44 & F.44      & Fermi--Dirac          & exp  & 4 & $1/(\exp(E\hbar/k_B T) + 1)$                            & \\
45 & F.45      & Saturating exp.       & exp  & 3 & $A(1 - \exp(-t/\tau))$                                  & \\
46 & F.46      & Saturating response   & hyp  & 5 & $n \mu \tanh(B/k_B T)$                                  & \\
47 & F.31   & Polarisation          & rat  & 5 & $q E_f/(m(\omega_0^2 - \omega^2))$                      & \\
48 & F.48      & Nested logarithm      & log  & 5 & $n \ln(1 + E V/k_B T)$                                  & \\
49 & F.49      & 6D multiplicative     & exp  & 6 & $n_0 A \exp(-m g/k_B T)$                                & \\
50 & III.21.1  & Current density       & poly & 5 & $\rho q A v / m$                                        & \\
\end{longtable}
}


\section{Budget-matched reproduction of Figure~\ref{fig:fixed_bases}}
\label{app:fig1_sweep}

Figure~\ref{fig:fixed_bases} fixes every column at a matched $21$-parameter budget; \Cref{tab:fig1} sweeps that budget to $\{13,21,41,125\}$ parameters under the same noisy protocol ($\sigma=0.03$, $N=2000$, seed $1729$). On the Heaviside step the geometry-adaptive edge sits at the noise floor and beats every smooth basis at every budget; a Haar wavelet, whose piecewise-constant geometry matches a step exactly, is indistinguishable from it there. On the five-level staircase the geometry edge instead scales with budget ($0.177$ at $13$p $\to 0.124$ at $21$p $\to 0.090$ at $125$p, vs.\ B-spline $0.135$ and RBF $0.141$ at $21$p), while Haar loses the staircase ($0.177$ at $21$p)---no single fixed geometry is uniformly appropriate.

\begin{table}[H]
\centering
\caption{Heaviside step, NRMSE by parameter budget (seed $1729$). The geometry-adaptive edge is budget-invariant at the noise floor; smooth bases need budget to catch up.}
\label{tab:fig1}
\footnotesize
\setlength{\tabcolsep}{8pt}
\begin{tabular}{lcccc}
\toprule
Basis & $K{=}12$ & $K{=}21$ & $K{=}42$ & $K{=}126$ \\
\midrule
geometry-adaptive edge & \textbf{0.059} & \textbf{0.059} & 0.059 & 0.059 \\
Haar wavelet           & 0.064 & 0.063 & 0.063 & \textbf{0.062} \\
B-spline               & 0.201 & 0.173 & 0.109 & 0.078 \\
RBF                    & 0.186 & 0.163 & 0.108 & 0.079 \\
Fourier                & 0.267 & 0.211 & 0.156 & 0.103 \\
\bottomrule
\end{tabular}
\end{table}

\section{Tied-affine exact-duality variant}
\label{app:tied}

The tied-affine edge shares one $(w,b)$ between the $\tanh$ and $J_p$ branches ($6$ parameters/edge) and is the exact $\nabla(\Psi_1+\Psi_2)$ variant; the reported model uses independent affines (\Cref{sec:banachkan}). \Cref{tab:tied} compares the three (they compete only against each other, which is why Banach-independent's small-sample count differs from \Cref{tab:controls}). The exact-duality construction is competitive---matching or beating the independent form at moderate noise and degrading more gently---but trails on clean and small-sample data, so we report the independent form.

\begin{table}[H]
\centering
\caption{Tied- vs.\ independent-affine (median NRMSE); ``Degr.'' is the $\sigma{=}0\to1$ ratio, ``Small-sample'' wins out of $90$.}
\label{tab:tied}
\footnotesize
\setlength{\tabcolsep}{7pt}
\begin{tabular}{lcccc c}
\toprule
Model & $\sigma{=}0$ & $\sigma{=}0.3$ & $\sigma{=}1.0$ & Degr. & Small-sample/$90$ \\
\midrule
Banach independent ($8$/edge)     & \textbf{0.029} & 0.095 & 0.259 & 8.9$\times$ & \textbf{55} \\
Banach tied ($6$/edge)            & 0.034 & 0.096 & 0.274 & 8.0$\times$ & 17 \\
Banach tied, $C^1$ ($6$/edge)     & 0.034 & \textbf{0.092} & \textbf{0.269} & \textbf{7.8$\times$} & 18 \\
\bottomrule
\end{tabular}
\end{table}

\end{document}